\newtheorem{lemma}{Lemma}
\newtheorem{remark}{Remark}
\newtheorem{example}{Example}
\DeclareMathOperator*{\argmin}{arg\,min}
\begin{document}

%
\runningtitle{Statistical Inference for Feature Selection after Optimal Transport-based Domain Adaptation}

%
\runningauthor{Nguyen Thang Loi, Duong Tan Loc, Vo Nguyen Le Duy}

\twocolumn[

\aistatstitle{Statistical Inference for Feature Selection after \\Optimal Transport-based Domain Adaptation}

\aistatsauthor{Nguyen Thang Loi$^{1, 2}$, Duong Tan Loc$^{1, 2}$, Vo Nguyen Le Duy$^{1, 2, 3, \ast}$}
\vspace{5pt}
\aistatsaddress{ 
$^1$University of Information Technology, Ho Chi Minh City, Vietnam \\ 
$^2$Vietnam National University, Ho Chi Minh City, Vietnam \\
$^3$RIKEN
}]

\begin{abstract}

Feature Selection (FS) under domain adaptation (DA) is a critical task in machine learning, especially when dealing with limited target data.
However, existing methods lack the capability to guarantee the reliability of FS under DA. 
In this paper, we introduce a novel statistical method to statistically test FS reliability under DA, named SFS-DA (statistical FS-DA). 
The key strength of SFS-DA lies in its ability to control the false positive rate (FPR) below a pre-specified level $\alpha$ (e.g., 0.05) while maximizing the true positive rate.
Compared to the literature on statistical FS, SFS-DA presents a unique challenge in addressing the effect of DA to ensure the validity of the inference on FS results.
We overcome this challenge by leveraging the Selective Inference (SI) framework. 
Specifically, by carefully examining the FS process under DA whose operations can be characterized by linear and quadratic inequalities, we prove that achieving FPR control in SFS-DA is indeed possible.
Furthermore, we enhance the true detection rate by introducing a more strategic approach.
Experiments conducted on both synthetic and real-world datasets robustly support our theoretical results, showcasing the SFS-DA’s superior performance.

\end{abstract}


\section{Introduction} \label{sec:intro}

Feature selection (FS) is an important task in machine learning (ML), aimed at identifying the most relevant features from a dataset while discarding redundant or irrelevant ones. 
By reducing the dimensionality of the data, FS enhances model interpretability, reduces overfitting, and improves computational efficiency. 
It plays a critical role in high-dimensional data settings, where the number of features often exceeds the number of observations. 
Common FS techniques such as Lasso \citep{tibshirani1996regression} and stepwise feature selection plays a critical role in several applications and has been widely applied in many areas such as economics and finance \citep{tian2015variable, coad2020catching}, bioinformatics \citep{wu2009genome, ma2008penalized}, and chemoinformatics \citep{lo2018machine}.

In many applications, limited data availability can impair the performance of FS models.
Domain adaptation (DA) offers a solution by allowing models to transfer data points from a source domain with abundant labeled data to a target domain with limited labeled data. 
This approach capitalizes the similarities between the two domains and utilizes techniques such as optimal transport to align their distributions, thereby improving the efficacy of FS in the target domain where limited data hinder effectiveness and boosting model performance in practical applications.

When conducting FS under DA, there is a critical risk of mistakenly selecting irrelevant features as relevant.
These erroneous FS results are commonly referred to as \emph{false positives}.
In high-stakes applications, such as medical diagnostics, these false positives can cause serious consequences.
For example, selecting irrelevant features may result in incorrectly identifying a patient as high-risk for breast cancer due to unrelated genetic markers. 
Such a misidentification could lead to unnecessary procedures, such as biopsies or preventive surgeries, causing emotional distress, financial burdens, and potential physical harm to the patient—all without any actual need for treatment.
 This highlights the importance of developing a statistical method that properly controls the false positive rate (FPR).

\begin{figure*}[hbt!]
\centering
\includegraphics[width=.85\textwidth]{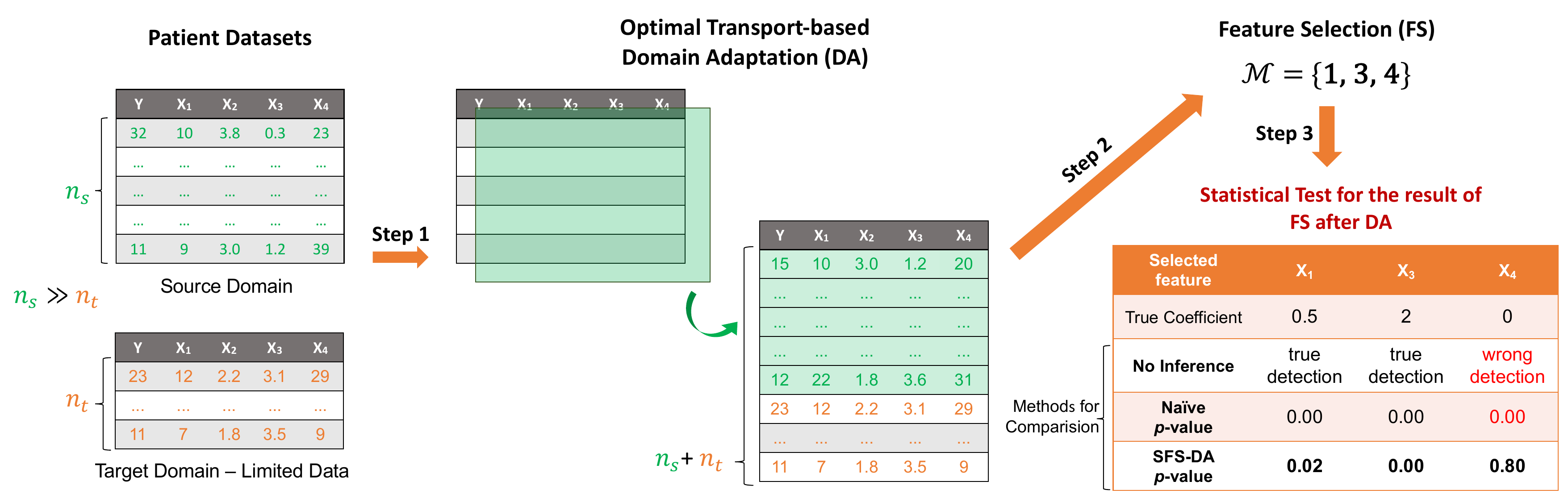}
\caption{Illustration of the proposed SFS-DA method. Performing FS-DA without statistical inference leads to the selection of irrelevant feature ($X_4$).
The naive $p$-value is small even for a falsely detected feature. 
%
With the proposed SFS-DA method, we successfully identify both false positives (FPs) and true positives (TPs), yielding large $p$-values for FPs and small $p$-values for TPs.}
    \label{fig:SFS_DA}
    \vspace{-10pt}
\end{figure*}

We also emphasize the critical need to manage the false negative rate (FNR).
In statistical literature, a common strategy involves initially controlling the false positive rate (FPR) at a pre-determined level, such as $\alpha = 0.05$, while concurrently aiming to minimize the FNR, which is equivalent to maximizing the true positive rate (TPR = 1 - FNR) through empirical evidence. Following this established methodology, this paper adopts a similar approach.
We propose a method to theoretically control the probability of misclassifying an irrelevant feature as relevant while simultaneously minimizing the probability of misclassifying a relevant feature as irrelevant.

To the best of our knowledge, \emph{none} of the existing methods can control the FPR of FS under DA. 
Several methods have been proposed in the literature for FPR control in FS techniques, including Lasso \citep{berk2013valid, lee2016exact, duy2021more} and stepwise feature selection \citep{tibshirani2016exact, sugiyama2021more}. 
However, these methods assume that the data originates from the same distribution, which becomes \emph{invalid} in scenarios where a distribution shift occurs and DA must be employed.
\citet{le2024cad} is the first work capable of conducting the inference under DA. However, their method primarily focuses on the unsupervised anomaly detection problem, which completely differs from the setup of FS under DA we consider in this paper. Consequently, their approach cannot be applied to our setting.

Conducting valid inference for controlling the FPR in FS under DA is challenging because the features selected are dependent on the application of FS-DA to the data. This violates the assumption of traditional inference methods, which require that the selected features be fixed in advance. 
To overcome the challenge, our idea is to leverage the concept of \emph{Selective Inference (SI)} \citep{lee2016exact}. 
However, directly applying SI in our setting is non-trivial, as SI is inherently problem- and model-specific. 
This necessitates the development of a new method tailored to the specific setting and the structure of the ML model.
Consequently, we need to thoroughly examine the algorithm's selection strategy in the context of FS under DA.

In this paper, we focus on Optimal Transport (OT)-based DA \citep{flamary2016optimal}, which has gained popularity in the OT community, as well as the Lasso, a well-established method for FS.
Additionally, we extend our method to the elastic net \citep{zou2005regularization}.
The detailed discussions on future extensions to other types of FS and DA are provided in \S \ref{sec:discussion}.

\textbf{Contributions.} Our contributions are as follows:

$\bullet$ We introduce and mathematically formulate the problem setup of testing FS results in the context of DA within the hypothesis testing framework.
We presents a unique challenge in addressing the impact of FS under DA to ensure the validity of FPR control.

$\bullet$ We propose a novel statistical method, named \emph{SFS-DA} (statistical FS-DA), to conduct the introduced hypothesis test.
To our knowledge, this is the first method capable of properly the FPR in FS under DA by providing valid $p$-values for the selected features.
Additionally, we introduce a more strategic approach to maximize the TPR, i.e., reducing the FNR.

$\bullet$ We conduct extensive experiments on both synthetic and real-world datasets to thoroughly validate our theoretical findings, demonstrating the superior performance of the proposed SFS-DA method.

\begin{table}[!t]
\renewcommand{\arraystretch}{1.1}
\centering
\caption{The key strength of the proposed SFS-DA lies in its ability to control the False Positive Rate (FPR).}
\vspace{-5pt}
\begin{tabular}{ |l|c|c|c| } 
  \hline
  & \textbf{No Inference} & \textbf{Naive} & \textbf{SFS-DA} \\
  \hline
  $N = 120$ & FPR = 1.0 & 0.15 & \textbf{0.05} \\
   \hline
  $N = 240$ & FPR = 1.0 & 0.12 & \textbf{0.04} \\
  \hline
\end{tabular}
\label{tbl:example_intro}
\vspace{-10pt}
\end{table}

\begin{example}
To demonstrate the importance of the proposed SFS-DA method, we present an example in Fig. \ref{fig:SFS_DA}.
Our objective is to perform FS to identify the relevant features that influence blood glucose level in the target domain, e.g., a hospital with a limited number of patients.
We employ the OT-based DA approach to transfer the data from the source domain, where we have a substantial patient dataset, to the target domain.
Subsequently, we apply a FS algorithm, i.e., the Lasso.
The FS after DA erroneously identified an irrelevant feature as relevant.
To resolve this issue, we introduced an additional inference step using the SFS-DA $p$-values, enabling us to identify both true positive and false positive detections.
Additionally, we repeated the experiments $N$ times, with the FPR results presented in Tab. \ref{tbl:example_intro}.
Using the proposed method, we successfully controlled the FPR at $\alpha = 0.05$, which other competing methods were unable to achieve.
\end{example}

\textbf{Related works.} Traditional statistical inference in feature selection often faces challenges regarding the validity of $p$-values. 
A common issue arises from the reliance on naive $p$-values, which are computed under the assumption that the selected features are fixed in advance. 
However, when features are selected using the FS-DA method, this assumption is violated, which makes the naive $p$-values invalid.
%
Data splitting (DS) offers a solution by dividing the data into two parts: one for selection and the other for inference.
This ensures that the feature selection phase is independent of the testing phase, making the $p$-values computed via DS valid. 
However, DS reduces the amount of data available for both phases, potentially weakening the statistical power.
Additionally, it is not always possible to split the data, e.g., when the data is correlated.

In recent years, SI has been actively studied for conducting inference on the features of linear models that are selected by FS methods.
SI was first introduced 
for the Lasso \citep{lee2016exact}.
The basic idea of SI is to conduct the inference conditional on the FS process.
This approach mitigates the bias of the FS step, allowing for the computation of valid $p$-values. 
The seminal paper laid the foundation for subsequent research on SI for FS \citep{loftus2014significance, fithian2014optimal, tibshirani2016exact, yang2016selective, suzumura2017selective, hyun2018exact, sugiyama2021more, das2022fast, duy2021more}. 
However, these methods assume the data is drawn from the same distribution.
Therefore, they lose the validity in the context of DA, where distribution shifts occur, making them inappropriate for such scenarios.

A closely related work, and the main motivation for this study, is \cite{le2024cad}, where the authors propose a framework for computing valid $p$-values for anomalies detected by an anomaly detection method within an OT-based DA setting. 
However, their focus is on unsupervised learning and anomaly detection task, which completely differs from the problem setup of supervised FS under DA that we consider in this paper. 
As a result, their method is not directly applicable to our setting.

\section{Problem Setup} \label{sec:problem_setup}

To formulate the problem, we consider a regression setup with two random response vectors defined by
\begin{align*}
	\bm Y^s = (Y_1^s, \dots, Y_{n_s}^s)^\top \sim \mathbb{N}(\bm \mu^s, \Sigma^s), \\
	\bm Y^t = (Y_1^t, \dots, Y_{n_t}^t)^\top \sim \mathbb{N}(\bm \mu^t, \Sigma^t),
\end{align*}
where $n_s$ and $n_t$ are the number of instances in the source and target domains, $\bm \mu^s$ and $\bm \mu^t$ are unknown signals, $\bm \veps^s$ and $\bm \veps^t$ are the Gaussian noise vectors with the covariance matrices $\Sigma^s$ and $\Sigma^t$ assumed to be known or estimable from independent data. 
We denote the feature matrices in the source and target domains, which are non-random, by $X^s \in \RR^{n_s \times p}$ and $X^t \in \RR^{n_t \times p}$, 
respectively, where $p$ is the number of features.
We assume that the number of instances in the target domain is limited, i.e., $n_t$ is much smaller than $n_s$.
The goal is to statistically test the Lasso results after DA.

\subsection{Optimal Transport (OT)-based DA}
We leverage the OT-based DA  proposed by \citet{flamary2016optimal} and apply it to our supervised setting. 
%
Let us define the source and target data as:
\begin{align} \label{eq:Ds_Dt}
	D^s &= 
	\begin{pmatrix}
		X^s ~ \bm Y^s
	\end{pmatrix} \text{ and }
	D^t = 
	\begin{pmatrix}
		X^t ~ \bm Y^t
	\end{pmatrix}, 
\end{align}
$D^s \in \RR^{n_s \times (p + 1)}$, $D^t \in \RR^{n_t \times (p + 1)}$.
%
%
Then, we define the the cost matrix as:
\begin{align*}
	C(D^s, D^t) 
	& = \Big[
	\big \| D_i^s - D_j^t \big \|^2_2 
	\Big]_{ij} \in \RR^{n_s \times n_t},
\end{align*}
for any $i \in [n_s] = \{1, 2, ..., n_s\}$ and $j \in [n_t]$.
We note that $D_i^s$ and $D_j^t$ are the $i^{\rm th}$ and $j^{\rm th}$ rows of $D^s$ and $D^t$, respectively, which are the $(p + 1)$-dimensional vectors.

\textbf{Optimal transport}. The OT problem for DA between the source and target domains is defined as:
\begin{align} \label{eq:ot_problem}
		&\hat{T} = \argmin 
		\limits_{T \in \RR^{n_s \times n_t}, ~T \geq 0} 
		~ \big \langle T, C(D^s, D^t) \big \rangle \\ 
		& \quad \quad \quad \quad ~ \text{s.t.} ~~~ T \bm{1}_{n_t} = \bm 1_{n_s}/{n_s}, ~ T^\top \bm{1}_{n_s} = \bm 1_{n_t}/{n_t}  \nonumber,
\end{align}
where $\langle\cdot,\cdot\rangle$ is the Frobenius inner product, $\bm{1}_n \in \RR^n$ is the vector whose elements are set to $1$.
Once the optimal transportation matrix $\hat{T}$ is obtained, the source instances are transported into the target domain.

\textbf{Transformed data after DA.}
The transformation $\tilde{D}^s$ of $D^s$ is defined as:
\begin{align} \label{eq:tilde_D_s}
	\tilde{D}^s 
		= n_s \hat{T} D^t \in \RR^{n_s \times (p + 1)}.
\end{align} 
More details are provided in Sec 3.3 of \cite{flamary2016optimal}. 
Let us decompose $\tilde{D}^s$ into 
$
	\tilde{D}^s = 
	\begin{pmatrix}
		\tilde{X}^s ~ \tilde{\bm Y}^s
	\end{pmatrix},
$
the matrix $\tilde{X}^s$ and vector $\tilde{\bm Y}^s$ can be defined as:
\begin{align} \label{eq:data_after_da}
	\tilde{X}^s = n_s \hat{T} X^t
	\quad 
	\text{and}
	\quad
	\tilde{\bm Y}^s = n_s \hat{T} \bm Y^t,
\end{align}
according to \eq{eq:tilde_D_s} and the definition of $D^t$ in \eq{eq:Ds_Dt}.
Here, $\tilde{X}^s$ and $\tilde{\bm Y}^s$ represent the transformations of $X^s$ and $\bm Y^s$ to the target domain, respectively.

\subsection{Feature Selection by Lasso after DA}

After transforming the data from the source domain to the target domain, we apply the Lasso to the combined dataset of the transformed and target data:
\begin{equation} \label{eq:lasso}
	\hat{{\bm \beta}} = \argmin 
	\limits_{{\bm \beta} \in \RR^p} \frac{1}{2} 
	\big \|
	\tilde{\bm Y} - \tilde{X} {\bm \beta}
	\big \|^2_2 + \lambda \|{\bm \beta}\|_1,
\end{equation}
where $\lambda \geq 0$ is a regularization parameter, 
\begin{align*} 
	\tilde{X} = 
	\begin{pmatrix}
		\tilde{X}^s \\ 
		X^ t
	\end{pmatrix}
	\in \RR^{(n_s + n_t) \times p}, 
	~
	\tilde{\bm Y} = 
	\begin{pmatrix}
		\tilde{\bm Y}^s \\ 
		\bm Y^ t
	\end{pmatrix}
	\in \RR^{n_s + n_t}
\end{align*}
Since the Lasso produces sparse solutions, the set of selected features is defined as:
\begin{align} \label{eq:selected_features}
	\cM = \big \{ j : \hat{\beta}_j \neq 0 \big \}.
\end{align}
While our primary focus in the main paper is on the Lasso, we also present an extension to the elastic net \citep{zou2005regularization}. Detailed information is provided in \S \ref{sec3:extension_elastic_net}. 
Future extensions to other types of FS methods are discussed in \S \ref{sec:discussion}.

\subsection{Statistical Inference and Decision Making}

Our goal is to assess if the selected features in \eq{eq:selected_features} are truly relevant or  just selected by chance. 
To conduct the inference on the $j^{\rm th}$ selected feature, we consider the statistical test on the following hypotheses:
\begin{equation*}
 {\rm H}_{0, j}:  \beta_j  = 0 \quad \text{vs.} \quad {\rm H}_{1, j}:  \beta_j \neq 0,
\end{equation*}
where $\beta_j = \left [ \big ({X^t_{\cM}}^\top X^t_{\cM} \big)^{-1} {X^t_{\cM}}^\top \bm \mu^t \right ]_j$ and $X^t_{\cM}$ is the sub-matrix of $X^t$ made up of columns in the set $\cM$.
To test these hypotheses, a natural choice of the test statistic is the least square estimate defined as:
\begin{align} \label{eq:test_statistic}
	\tau_j = \left [ \big ({X^t_{\cM}}^\top X^t_{\cM} \big)^{-1} {X^t_{\cM}}^\top \bm Y^t \right ]_j
	= \bm \eta_j^\top {\bm Y^s \choose \bm Y^t },
\end{align}
where $\bm \eta_j$ is the direction of the test statistic: 
\begin{align} \label{eq:eta_j}
\bm \eta_j = 
\begin{pmatrix}
	\bm 0^{s} \\ 
	X^t_{\cM} \big ({X^t_{\cM}}^\top X^t_{\cM} \big)^{-1} \bm{e}_j
\end{pmatrix},
\end{align}
in which $\bm 0^{s} \in \RR^{n_s}$ represents a vector where all entries are set to 0, 
$\bm e_j \in \RR^{|\cM|}$ is a vector in which the $j^{\rm th}$ entry is set to $1$, and $0$ otherwise.

\textbf{Compute $p$-value and decision making.} After obtaining the test statistic in \eq{eq:test_statistic}, we proceed to compute a $p$-value.
Given a significance level $\alpha \in [0, 1]$, e.g., 0.05, we reject the null hypothesis ${\rm H}_{0, j}$ and assert that the $j^{\rm th}$ feature is relevant if the $p$-value $ \leq \alpha$.
Conversely, if the $p$-value $ > \alpha$, there is not enough evidence to conclude that the $j^{\rm th}$ feature is relevant.

\paragraph{Challenge of computing a valid $p$-value.}
The conventional (naive) $p$-value is defined as: 
\begin{align*}
	p^{\rm naive}_j = 
	\mathbb{P}_{\rm H_{0, j}} 
	\Bigg ( 
		\left | \bm \eta_j^\top {\bm Y^s \choose \bm Y^t } \right |
		\geq 
		\left | \bm \eta_j^\top {\bm Y^s_{\rm obs} \choose \bm Y^t_{\rm obs} } \right |
	\Bigg ), 
\end{align*}
where $\bm Y^s_{\rm obs}$ and $\bm Y^t_{\rm obs}$ are the observations of the random vectors $\bm Y^s$ and $\bm Y^t$, respectively.
If the vector $\bm \eta_j$ is independent of the FS and DA algorithms, the naive $p$-value is valid in the sense that 
\begin{align} \label{eq:valid_p_value}
	\mathbb{P} \Big (
	\underbrace{p_j^{\rm naive} \leq \alpha \mid {\rm H}_{0, j} \text{ is true }}_{\text{a false positive}}
	\Big) = \alpha, ~~ \forall \alpha \in [0, 1],
\end{align} 
i.e., the probability of obtaining a false positive is controlled under a certain level of guarantee.
However, in our setting, the vector $\bm \eta_j$ is influenced by the FS and DA, i.e., it is defined based on the set of selected features after performing FS under DA.
As a result, the property of a valid $p$-value in \eq{eq:valid_p_value} is no longer satisfied.
Consequently, the naive $p$-value is \emph{invalid} because it does not account for  the effect of FS and DA.

\section{Proposed SFS-DA Method} \label{sec:method}

\begin{figure*}[!t]
    \centering
    \includegraphics[width=.8\textwidth]{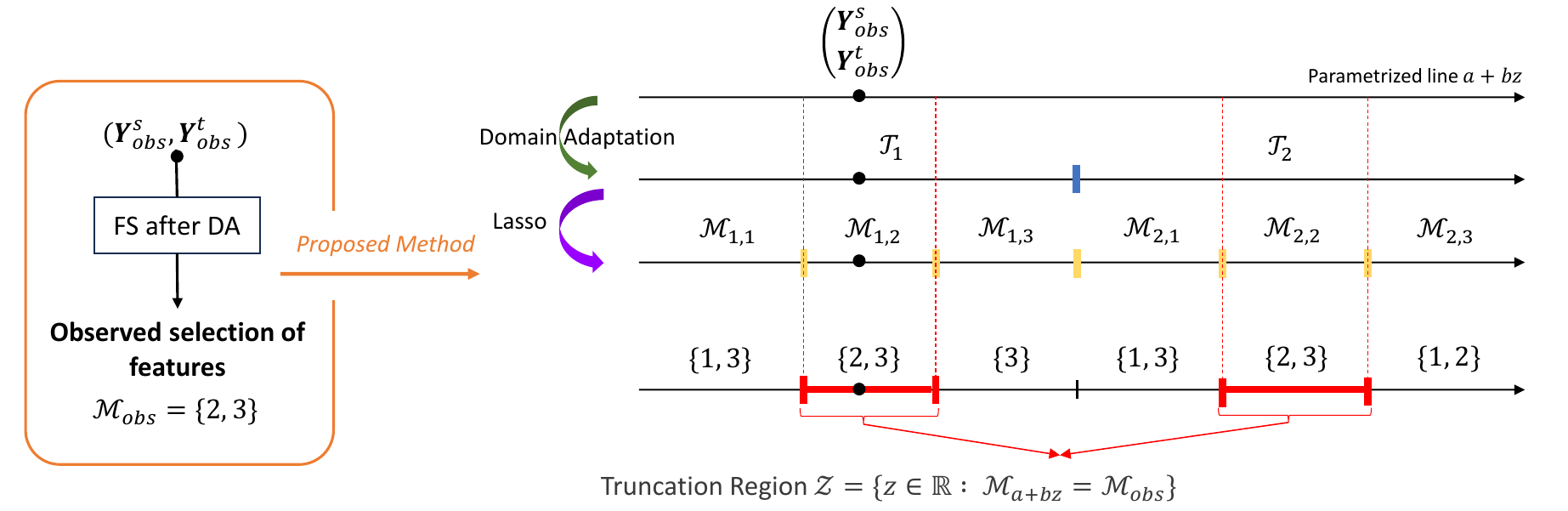}
    \caption{After performing DA, we apply FS to identify the relevant features. 
    Next, we parametrize the data using a scalar parameter $z$ in the dimension of the test statistic to define the truncation region $\cZ$, whose the data have the \emph{same} FS results as the observed data. Finally, we conduct the inference by conditioning on $\cZ$. To enhance the efficiency, we utilize a divide-and-conquer strategy to effectively identify the region $\cZ$.}
    \label{fig:line_search_approach}
    \vspace{-10pt}
\end{figure*}

In this section, we present the details of the proposed SFS-DA method for computing the valid $p$-value.

\subsection{The valid $p$-value in SFS-DA}

To compute the valid $p$-value, we first need to determine the distribution of the test statistic defined in \eq{eq:test_statistic}.
We achieve this by leveraging the concept of SI, specifically by examining the distribution of the test statistic \emph{conditional} on the FS results after DA:
\begin{align} \label{eq:conditional_distribution}
	\mathbb{P} \Bigg ( 
	\bm \eta_j^\top {\bm Y^s \choose \bm Y^t }
	~
	\Big |
	~ 
	\cM_{\bm Y^s, \bm Y^t}
	=
	\cM_{\rm obs}
	\Bigg ),
\end{align}
where $\cM_{\bm Y^s, \bm Y^t}$ is the set of selected features of Lasso FS after DA \emph{for any random} vectors $\bm Y^s$ and $\bm Y^t$, and $\cM_{\rm obs} = \cM_{\bm Y^s_{\rm obs}, \bm Y^t_{\rm obs}}$ is the observed selected features.

Based on the distribution in \eq{eq:conditional_distribution}, we introduce the selective $p$-value which is defined as:
\begin{align} \label{eq:selective_p}
	p^{\rm sel}_j = 
	\mathbb{P}_{\rm H_{0, j}} 
	\Bigg ( 
		\left | \bm \eta_j^\top {\bm Y^s \choose \bm Y^t } \right |
		\geq 
		\left | \bm \eta_j^\top {\bm Y^s_{\rm obs} \choose \bm Y^t_{\rm obs} } \right |
		~
		\Bigg | 
		~
		\cE
	\Bigg ), 
\end{align}
where $\cE$ is the conditioning event defined as
\begin{align} \label{eq:conditioning_event}
\cE = \Big \{ 
	\cM_{\bm Y^s, \bm Y^t}
	=
	\cM_{\rm obs}, ~
	\cQ_{\bm Y^s, \bm Y^t}
	=
	\cQ_{\rm obs}
\Big \}. 
\end{align}
The $\cQ_{\bm Y^s, \bm Y^t}$ is the \emph{nuisance component} defined as 
\begin{align} \label{eq:q_and_b}
	\cQ_{\bm Y^s, \bm Y^t} = 
	\Big ( 
	I_{n_s + n_t} - 
	\bm b
	\bm \eta_j^\top \Big ) 
	{\bm Y^s \choose \bm Y^t},
\end{align}
where 
$
	\bm b = \frac{\Sigma \bm \eta_j}
	{\bm \eta_j^\top \Sigma \bm \eta_j}
$
and 
$
\Sigma = 
\begin{pmatrix}
	\Sigma^s & 0 \\ 
	0 & \Sigma^t
\end{pmatrix}.
$

\begin{remark}
The nuisance component $\cQ_{\bm Y^s, \bm Y^t}$ corresponds to
the component $\bm z$ in the seminal paper of \cite{lee2016exact} (see Sec. 5, Eq. (5.2)).
The additional conditioning on $\cQ_{\bm Y^s, \bm Y^t}$ is required for technical reasons, specifically to facilitate tractable inference.
This is the standard approach in SI literature and it is used in almost all the SI-related works that we cite.

\begin{lemma} \label{lemma:valid_selective_p}
The selective $p$-value proposed in \eq{eq:selective_p} satisfies the property of a valid $p$-value:
\begin{align*}
	\mathbb{P}_{\rm H_{0, j}}  \Big (
	p_j^{\rm sel} \leq \alpha
	\Big) = \alpha, ~~ \forall \alpha \in [0, 1].
\end{align*} 
\end{lemma}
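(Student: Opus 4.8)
The plan is to follow the standard Selective Inference argument: reduce the conditional law of the test statistic to a one-dimensional truncated Gaussian and then invoke the probability integral transform. First I would decompose the response along the direction $\bm \eta_j$. From the definitions of the nuisance component in \eq{eq:q_and_b} and of $\bm b$, we have the identity
\[
\binom{\bm Y^s}{\bm Y^t} = \cQ_{\bm Y^s, \bm Y^t} + \bm b\,\Big(\bm \eta_j^\top \binom{\bm Y^s}{\bm Y^t}\Big).
\]
Writing $T_j = \bm \eta_j^\top \binom{\bm Y^s}{\bm Y^t}$ for the test statistic, this shows that once we condition on $\cQ_{\bm Y^s,\bm Y^t} = \cQ_{\rm obs}$, the entire data vector becomes the affine function $\cQ_{\rm obs} + \bm b\, z$ of the single scalar $z = T_j$.

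Next I would establish the independence of $T_j$ and $\cQ_{\bm Y^s,\bm Y^t}$. Since $\binom{\bm Y^s}{\bm Y^t}$ is Gaussian, the pair $(T_j, \cQ_{\bm Y^s,\bm Y^t})$ is jointly Gaussian, and a direct computation using $\bm b = \Sigma\bm \eta_j/(\bm \eta_j^\top \Sigma \bm \eta_j)$ gives $\mathrm{Cov}(T_j, \cQ_{\bm Y^s,\bm Y^t}) = \bm \eta_j^\top \Sigma - (\bm \eta_j^\top \Sigma \bm \eta_j)\,\bm b^\top = \bm 0$. Hence conditioning on $\cQ_{\bm Y^s,\bm Y^t}=\cQ_{\rm obs}$ leaves the marginal law of $T_j$ unchanged, namely $T_j \sim \mathbb{N}\big(\bm \eta_j^\top \binom{\bm \mu^s}{\bm \mu^t},\, \bm \eta_j^\top \Sigma \bm \eta_j\big)$, and under $\mathrm{H}_{0, j}$ the mean equals $\beta_j = 0$.

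I would then translate the conditioning event $\cE$ into a truncation of $z$. Because under $\cQ_{\bm Y^s,\bm Y^t}=\cQ_{\rm obs}$ the data, and therefore the selected set $\cM_{\bm Y^s,\bm Y^t}$, depends only on $z$, the event $\cM_{\bm Y^s,\bm Y^t}=\cM_{\rm obs}$ is equivalent to $z\in\cZ$ for the deterministic region $\cZ = \{z\in\RR : \cM_{\cQ_{\rm obs}+\bm b z} = \cM_{\rm obs}\}$. Combined with the previous step, conditional on $\cE$ the statistic $T_j$ follows the mean-zero Gaussian $\mathbb{N}(0,\bm \eta_j^\top \Sigma \bm \eta_j)$ truncated to $\cZ$. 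Letting $F$ denote the CDF of this truncated law, the probability integral transform gives that $F(T_j)\mid\cE$ is uniform on $[0,1]$; since $p_j^{\rm sel}$ in \eq{eq:selective_p} is a fixed monotone transform of $F(T_j)$ (its two-sided form based on $|T_j|$), it is likewise uniform on $[0,1]$ given $\cE$, so that $\mathbb{P}_{\mathrm{H}_{0, j}}(p_j^{\rm sel}\le\alpha\mid\cE)=\alpha$. Marginalizing over all values of the nuisance component via the law of total probability then removes the conditioning and yields the claimed unconditional identity.

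The main obstacle I anticipate is the clean execution of the third step: verifying that the conditioning event collapses to a one-dimensional truncation and that the resulting truncated-normal law is exactly pivotal under the null. The explicit description of $\cZ$ through the linear and quadratic inequalities induced by the OT map and the Lasso is not needed for validity and can be deferred to the computational part of the method.
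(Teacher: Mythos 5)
Your proposal is correct and follows essentially the same route as the paper's proof: reduce the conditional law of $\bm \eta_j^\top \binom{\bm Y^s}{\bm Y^t}$ given $\cE$ to a one-dimensional truncated Gaussian on $\cZ$, apply the probability integral transform to get $\mathbb{P}_{\rm H_{0,j}}(p_j^{\rm sel} \leq \alpha \mid \cE) = \alpha$, and then marginalize over the nuisance component and the selection event. The only difference is that you make explicit the zero-covariance computation showing $T_j$ is independent of $\cQ_{\bm Y^s, \bm Y^t}$, a step the paper leaves implicit when asserting the truncated normal distribution.
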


\begin{proof}
The proof is deferred to Appendix \ref{app:proof_valid_p}.
\end{proof}
\end{remark}

Lemma \ref{lemma:valid_selective_p} indicates that, by using the proposed selective $p$-value, the FPR is theoretically controlled for any level $\alpha \in [0, 1]$.
Once  $\cE$ in \eq{eq:conditioning_event} is identified, the selective $p$-value can be computed.
We will present the characterization of $\cE$ in the next section.

\subsection{Characterization of Conditioning Event $\cE$ }

Let us define the set of ${\bm Y^s \choose \bm Y^t } \in \RR^{n_s + n_t}$ that satisfies the conditions in $\cE$ defined in \eq{eq:conditioning_event} as:
\begin{align} \label{eq:conditional_data_space}
	{\hspace{-2mm}}\cD = \left \{ 
	{\bm Y^s \choose \bm Y^t }
	\in \RR^{n_s + n_t}
	 ~\Big | ~
	\begin{array}{l}
	\cM_{\bm Y^s, \bm Y^t}
	=
	\cM_{\rm obs}, \\
	\cQ_{\bm Y^s, \bm Y^t}
	=
	\cQ_{\rm obs}
	\end{array}
	\right \}. 
\end{align}
In the following lemma, we show that the conditional data space $\cD$ is, in fact,  restricted to a \emph{line} in $\RR^n$.
\begin{lemma} \label{lemma:data_line}
Let us define $\bm a = \cQ_{\rm obs}$, the set $\cD$ in \eq{eq:conditional_data_space} can be rewritten using a scalar parameter $z \in \RR$ as:
\begin{align} \label{eq:conditional_data_space_line}
	\cD = \left \{ {\bm Y^s \choose \bm Y^t } = \bm Y(z) = \bm a + \bm b z \mid z \in \cZ \right \},
\end{align}
where $\bm b$ is defined in \eq{eq:q_and_b} and $\cZ$ is defined as:
\begin{align} \label{eq:cZ}
	\cZ = \Big \{ 
	z \in \RR 
	\mid 
	\cM_{\bm a + \bm b z} = \cM_{\rm obs}
	\Big \}.
\end{align}
Here, 
$
\cM_{\bm a + \bm b z} = \cM_{{\bm Y^s \choose \bm Y^t }}
$
is equivalent to $\cM_{\bm Y^s, \bm Y^t}$.
\end{lemma}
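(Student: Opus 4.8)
The plan is to show that conditioning on the nuisance component $\cQ_{\bm Y^s, \bm Y^t} = \cQ_{\rm obs}$ algebraically constrains the combined response vector to an affine line, and then to absorb the remaining condition $\cM_{\bm Y^s, \bm Y^t} = \cM_{\rm obs}$ into a restriction on the scalar parameter. First I would start from the definition of the nuisance component in \eq{eq:q_and_b}, namely $\cQ_{\bm Y^s, \bm Y^t} = (I_{n_s + n_t} - \bm b \bm \eta_j^\top) {\bm Y^s \choose \bm Y^t}$, and set it equal to the observed value $\cQ_{\rm obs} = \bm a$. Rearranging gives ${\bm Y^s \choose \bm Y^t} = \bm a + \bm b \, (\bm \eta_j^\top {\bm Y^s \choose \bm Y^t})$, so that writing $z = \bm \eta_j^\top {\bm Y^s \choose \bm Y^t}$ yields exactly the parametrization ${\bm Y^s \choose \bm Y^t} = \bm a + \bm b z$. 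The key algebraic fact making this clean is that $\bm b = \Sigma \bm \eta_j / (\bm \eta_j^\top \Sigma \bm \eta_j)$ satisfies $\bm \eta_j^\top \bm b = 1$, so that $z$ is consistently recovered as the coefficient along $\bm b$ and the decomposition ${\bm Y^s \choose \bm Y^t} = \bm a + \bm b z$ is a genuine one-to-one reparametrization of the line.

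Next I would verify the two directions of set equality. For the forward inclusion, any ${\bm Y^s \choose \bm Y^t} \in \cD$ satisfies the nuisance condition, so by the rearrangement above it must lie on the line $\bm a + \bm b z$ for $z = \bm \eta_j^\top {\bm Y^s \choose \bm Y^t}$; since it also satisfies $\cM_{\bm Y^s, \bm Y^t} = \cM_{\rm obs}$, the corresponding $z$ lies in $\cZ$ as defined in \eq{eq:cZ}. For the reverse inclusion, any point $\bm a + \bm b z$ with $z \in \cZ$ automatically satisfies the feature-selection condition by the definition of $\cZ$, and I would check that it also satisfies the nuisance condition by plugging back into \eq{eq:q_and_b} and using $\bm \eta_j^\top \bm b = 1$ together with $(I - \bm b \bm \eta_j^\top)\bm a = \bm a$, which holds because $\bm a = \cQ_{\rm obs}$ already lies in the range of the projection $I - \bm b \bm \eta_j^\top$. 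These two checks together establish $\cD = \{\bm a + \bm b z : z \in \cZ\}$.

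I expect the main subtlety, rather than obstacle, to be carefully confirming that $\bm a = \cQ_{\rm obs}$ is fixed (invariant) under the projection $(I_{n_s+n_t} - \bm b \bm \eta_j^\top)$, i.e. that applying the nuisance operator to $\bm a$ returns $\bm a$ itself. This follows because $\cQ_{\rm obs}$ is defined as the image of the observed data under exactly this operator, and the operator $I - \bm b \bm \eta_j^\top$ is idempotent (since $\bm \eta_j^\top \bm b = 1$ implies $(\bm b \bm \eta_j^\top)^2 = \bm b \bm \eta_j^\top$). Once this idempotency is in hand, the algebra is routine and the decomposition of $\RR^{n_s+n_t}$ into the one-dimensional direction spanned by $\bm b$ and the complementary subspace fixed by the projection makes the line structure transparent. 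The final step is simply to note that the map $z \mapsto \bm a + \bm b z$ restricted to $\cZ$ traces out precisely $\cD$, completing the proof.
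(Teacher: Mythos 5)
Your proposal is correct and follows essentially the same route as the paper's own proof: rearrange the nuisance condition $(I_{n_s+n_t}-\bm b\bm\eta_j^\top){\bm Y^s \choose \bm Y^t}=\cQ_{\rm obs}$ to obtain the line parametrization with $z=\bm\eta_j^\top{\bm Y^s \choose \bm Y^t}$, then fold the feature-selection condition into the restriction $z\in\cZ$. In fact you are somewhat more careful than the paper, which stops after the rearrangement; your explicit check of the reverse inclusion via $\bm\eta_j^\top\bm b=1$ and the idempotency of $I-\bm b\bm\eta_j^\top$ is a worthwhile addition but not a different argument.
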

%
%
The proof is deferred to Appendix \ref{app:proof_line}. The fact that the conditional space can be restricted to a line was implicitly exploited in \citep{lee2016exact} and discussed in Sec. 6 of \citep{liu2018more}.
Lemma \ref{lemma:data_line} shows that it is not necessary to consider the $n$-dimensional data space.
Instead, we only need to focus on the \emph{one-dimensional projected} data space $\cZ$ in \eq{eq:cZ}.

\textbf{Reformulation of the selective \textit{p}-value computation with $\cZ$.}
Let us denote a random variable $Z \in \RR$ and its observation $Z_{\rm obs} \in \RR$ as follows:
\begin{align*}
	Z = \bm \eta_j^\top {\bm Y^s \choose \bm Y^t } \in \RR 
	~~ \text{and} ~~ 
	Z_{\rm obs} = \bm \eta_j^\top {\bm Y^s_{\rm obs} \choose \bm Y^t_{\rm obs} } \in \RR.
\end{align*}
%
%
The selective $p$-value in (\ref{eq:selective_p}) can be rewritten as 
\begin{align} \label{eq:selective_p_reformulated}
	p^{\rm sel}_j = \mathbb{P}_{\rm H_{0, j}} \Big ( |Z| \geq |Z_{\rm obs}| \mid  Z \in \cZ \Big ).
\end{align}
Once the truncation region $\cZ$ is identified, computation of the selective $p$-value in (\ref{eq:selective_p_reformulated}) is straightforward.
Therefore, the remaining task is to identify $\cZ$.

\subsection{Identification of Truncation Region $\cZ$}
\label{subsec:identification_cZ}
To identify $\cZ$, the naive approach is to apply Lasso FS under DA on 
$
{\bm Y^s \choose \bm Y^t } = \bm a + \bm b z
$
for \emph{infinitely many} values of $z \in \RR$ to obtain the set of features $\cM_{\bm a + \bm b z}$ and check if it is the same as the observed $\cM_{\rm obs}$ or not, which is \emph{computationally intractable}.
To resolve the difficulty, we introduce an efficient approach (illustrated in Fig. \ref{fig:line_search_approach}), inspired by \citet{duy2021more, le2024cad}, to identify $\cZ$ in finite operations as follows:

%
%

$\bullet$ We divide the problem into multiple sub-problems, conditioning not only on the set of selected features but also on the DA transportation and the signs of the coefficients of the selected feature.

$\bullet$ We show that the sub-problem is efficiently solvable.

$\bullet$ We combine multiple sub-problems to obtain $\cZ$.

\textbf{Divide-and-conquer strategy.}
Let us denote by $U$ a total number of possible transportations for DA along the parametrized line.
We define $V_u$ as a number of all possible sets of features can be obtained by Lasso FS after the $\cT_u$ transportation, $u \in [U]$.
The entire one-dimensional space $\RR$ can be decomposed as:
\begin{align*}
	\RR
	& = 
	\bigcup \limits_{u \in [U]}
	\bigcup \limits_{v \in [V_u]}
	\underbrace{
	\left \{
	z  \in \RR
	~ \bigg | 
	\begin{array}{l}
	\cT_{\bm a + \bm b z} = \cT_u, \\
	\cM_{\bm a + \bm b z} = \cM_{v}, \\
	\cS_{\cM_{\bm a + \bm b z}} = \cS_{\cM_v}
	\end{array}
	\right \}}_{
	\text{a sub-problem of additional conditioning}
	},
\end{align*}
where $\cT_{\bm a + \bm b z}$ denotes the OT-based DA on $\bm a + \bm b z$,
$\cS_{\cM_{\bm a + \bm b z}}$ denotes a set of signs of the coefficients for the selected features in $\cM_{\bm a + \bm b z}$.
For $u \in [U], v \in [V_u]$, we aim to identify a set:
\begin{align} \label{eq:cR}
	\cR = 
	\Big \{
		(u, v) : \cM_{v} = \cM_{\rm obs}
	\Big \}.
\end{align}
The region $\cZ$ in \eq{eq:cZ} then can be identified as follows:
\begin{align}
	\cZ &= \Big \{ 
	z \in \RR 
	\mid 
	\cM_{\bm a + \bm b z} = \cM_{\rm obs}
	\Big \} \nonumber\\
	& = 
	\bigcup \limits_{(u, v) \in \cR} 
	\Bigg \{z \in \RR ~ \bigg |  
	\begin{array}{l}
	\cT_{\bm a + \bm b z} = \cT_u, \\
	\cM_{\bm a + \bm b z} = \cM_{v}, \\
	\cS_{\cM_{\bm a + \bm b z}} = \cS_{\cM_v}
	\end{array}
	\Bigg \}. \label{eq:cZ_new}
\end{align}

\textbf{Solving of each sub-problem.}
For any $u \in [U]$ and $v \in [V_u]$, we define the subset of one-dimensional projected dataset on a line for the sub-problem as:
\begin{align} \label{eq:cZ_sub_problem}
	\cZ_{u, v} = 
	\left \{z 
	~ \Big | 
	\begin{array}{l}
	\cT_{\bm a + \bm b z} = \cT_u, \\
	\cM_{\bm a + \bm b z} = \cM_{v}, 
	\cS_{\cM_{\bm a + \bm b z}} = \cS_{\cM_v}
	\end{array}
	\right \}.
\end{align}
The sub-problem region $\cZ_{u, v}$ can be re-written as:
\begin{align*}
	\cZ_{u, v}  &= \cZ_u \cap \cZ_v, \text{where }
	\cZ_u  = 
	\Big \{ 
	z \in \RR
	\mid 
	\cT_{\bm a + \bm b z} = \cT_u
	\Big \},  \\
	\cZ_v &= 
	\Big \{ 
	z \in \RR
	\mid 
	\cM_{\bm a + \bm b z} = \cM_v, \cS_{\cM_{\bm a + \bm b z}} = \cS_{\cM_v}
	\Big \}.
\end{align*}

\begin{lemma} \label{lemma:cZ_u}
The set $\cZ_u$ can be characterized by a set of quadratic inequalities w.r.t. $z$ described as follows:
\begin{align*}
	\cZ_u
	= \Big \{ 
	z \in \RR 
	\mid 
	\bm p + \bm q z + \bm r z^2 \geq \bm 0
	\Big \},
\end{align*}
where vectors $\bm p$, $\bm q$, and $\bm r$ are defined in Appendix \ref{app:proof_cZ_u}.
\end{lemma}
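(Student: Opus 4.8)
The plan is to exploit two structural facts and then compose them: each entry of the cost matrix is a quadratic function of $z$, while the region in cost‑space on which a fixed transport plan $\cT_u$ stays optimal is polyhedral. Linear inequalities in the cost entries, pulled back through a quadratic substitution, become quadratic inequalities in $z$.

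First I would parametrize the cost matrix along the line. Splitting $\bm a = (\bm a^s, \bm a^t)$ and $\bm b = (\bm b^s, \bm b^t)$ according to the source/target blocks, the responses along the line are $\bm Y^s(z) = \bm a^s + \bm b^s z$ and $\bm Y^t(z) = \bm a^t + \bm b^t z$, while $X^s, X^t$ are non‑random and hence constant in $z$. Since $D_i^s(z) = (X_i^s, Y_i^s(z))$ and $D_j^t(z) = (X_j^t, Y_j^t(z))$, every cost entry splits as
$$C_{ij}(z) = \big\| X_i^s - X_j^t \big\|_2^2 + \big(Y_i^s(z) - Y_j^t(z)\big)^2,$$
and because $Y_i^s(z) - Y_j^t(z)$ is affine in $z$, its square is quadratic. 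Thus $C_{ij}(z) = p_{ij} + q_{ij} z + r_{ij} z^2$ for explicit scalars determined by $X^s, X^t, \bm a, \bm b$.

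Next I would invoke the LP structure of the OT problem in \eqref{eq:ot_problem}. The plan $\cT_u$ is the basic feasible solution (vertex) returned by the solver, with a fixed support/basis on the transportation polytope. By LP duality, this basic solution remains optimal exactly when its reduced costs are nonnegative. With the support of $\cT_u$ held fixed, the dual potentials $\bm f \in \RR^{n_s}$, $\bm g \in \RR^{n_t}$ determined by $f_i + g_j = C_{ij}(z)$ on the basic cells are \emph{linear} functions of the cost entries (the relevant basis submatrix is constant), and the optimality conditions read $C_{ij}(z) - f_i(z) - g_j(z) \geq 0$ for every non‑basic $(i,j)$. Each such condition is therefore linear in the collection $\{C_{kl}(z)\}$.

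Finally, substituting $C_{ij}(z) = p_{ij} + q_{ij} z + r_{ij} z^2$ into these linear‑in‑$C$ inequalities, and using that any linear combination of quadratics in $z$ is again quadratic, each optimality condition collapses to a scalar inequality $p_k + q_k z + r_k z^2 \geq 0$. Stacking over all non‑basic cells yields $\bm p + \bm q z + \bm r z^2 \geq \bm 0$, the claimed characterization, with $\bm p, \bm q, \bm r$ read off from the substitution and recorded in Appendix \ref{app:proof_cZ_u}. The main obstacle I anticipate is the bookkeeping in the middle step: expressing the dual potentials explicitly as linear maps of the cost entries through the fixed basis, and verifying that the event $\{\cT_{\bm a + \bm b z} = \cT_u\}$ coincides with the reduced‑cost optimality region rather than a strict subset of it. The remaining work, forming the coefficient vectors, is routine linear algebra.
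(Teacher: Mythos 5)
Your proposal is correct and follows essentially the same route as the paper: both parametrize the cost entries as quadratics in $z$ (fixed feature part plus the squared affine response difference) and then characterize the region where the fixed optimal basis of the OT linear program remains optimal via nonnegativity of the reduced costs, which are linear in the cost vector and hence quadratic in $z$. The only cosmetic difference is that you express the reduced costs through transportation dual potentials $C_{ij}-f_i-g_j$, whereas the paper writes them with the general simplex basis-inverse formula $\tilde{\bm c}_{\cB_u^c}^\top - \tilde{\bm c}_{\cB_u}^\top H_{:,\cB_u}^{-1} H_{:,\cB_u^c}$; these coincide, and the degeneracy caveat you raise is likewise left implicit in the paper.
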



The proof is deferred to Appendix \ref{app:proof_cZ_u}. The purpose of Lemma \ref{lemma:cZ_u} is to ensure that the transportation $\cT_u$ remains the same for all $z \in \cZ_u$.

\begin{lemma}\label{lemma:cZ_v}
Let us define the Lasso optimization problem after the $\cT_u$ transportation as:
\begin{equation*}
	\hat{{\bm \beta}}(z) = \argmin \limits_{{\bm \beta} \in \RR^p} 
	\frac{1}{2} 
	\big \|
	\tilde{\bm Y}_u(z) - \tilde{X}_u {\bm \beta}
	\big \|^2_2 
	+ \lambda \|{\bm \beta}\|_1, 
\end{equation*}
where 
$\tilde{\bm Y}_u(z) = \Omega_u \bm Y (z)$ and $\tilde{X}_u = \Omega_u X$.
Here, 
\begin{align*}
	\Omega_u = 
	\begin{pmatrix}
		0_{n_s \times n_s} & n_s \cT_u \\
		0_{n_t \times n_s} & I_{n_t}
	\end{pmatrix}
	\in \RR^{(n_s + n_t) \times (n_s + n_t)},
\end{align*}
where $0_{n \times m} \in \RR^{n \times m}$ is the zero matrix, $I_n \in \RR^{n \times n}$ is the identity matrix, and $X = (X^s ~ X^t)^\top$. 
The set $\cZ_v$ can be identified as follows:
\begin{align*}
	\cZ_v = 
	\left \{ 
		z \in \RR ~\Bigg |
		\begin{array}{l}
			\hat{{\bm \beta}}_j(z) \neq 0, ~\forall j \in \cM_v, \\
			\hat{{\bm \beta}}_j(z) = 0, ~\forall j \not \in \cM_v, \\
			{\rm sign}\Big (\hat{{\bm \beta}}_{\cM_v}(z) \Big ) = \cS_{\cM_v}
		\end{array}
	\right \},
\end{align*}
which can be efficiently computed by solving a set of linear inequalities w.r.t $z$, derived from the Karush–Kuhn–Tucker (KKT) conditions of the Lasso.

\end{lemma}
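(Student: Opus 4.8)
The plan is to exploit the Karush–Kuhn–Tucker (KKT) conditions of the Lasso, which fully characterize the optimal solution, and to show that conditioning on a fixed active set $\cM_v$ together with a fixed sign vector $\cS_{\cM_v}$ reduces these conditions to a system of \emph{linear} inequalities in $z$. The crucial observation is that, after fixing the transportation $\cT_u$ (so that $\Omega_u$ is constant on $\cZ_u$), the response $\tilde{\bm Y}_u(z) = \Omega_u(\bm a + \bm b z)$ is affine in $z$, while the design $\tilde{X}_u$ does not depend on $z$ at all.

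First I would write the KKT stationarity and complementary-slackness conditions. On the active set $\cM_v$, stationarity gives $\tilde{X}_{u, \cM_v}^\top\big(\tilde{\bm Y}_u(z) - \tilde{X}_{u, \cM_v}\hat{\bm \beta}_{\cM_v}(z)\big) = \lambda\, \cS_{\cM_v}$, which I solve explicitly as $\hat{\bm \beta}_{\cM_v}(z) = \big(\tilde{X}_{u, \cM_v}^\top \tilde{X}_{u, \cM_v}\big)^{-1}\big(\tilde{X}_{u, \cM_v}^\top \tilde{\bm Y}_u(z) - \lambda\, \cS_{\cM_v}\big)$. Because $\tilde{\bm Y}_u(z) = \Omega_u \bm a + \Omega_u \bm b\, z$ is affine in $z$, each active coefficient is an affine function $\hat{\bm \beta}_j(z) = \psi_j + \phi_j z$, with $\psi_j$ and $\phi_j$ read off by substitution.

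Next I would translate the two defining requirements of $\cZ_v$ into constraints on these affine functions. The sign requirement ${\rm sign}\big(\hat{\bm \beta}_{\cM_v}(z)\big) = \cS_{\cM_v}$ (which simultaneously enforces $\hat{\bm \beta}_j(z)\neq 0$ for $j\in\cM_v$) is equivalent to $\cS_{\cM_v, j}\cdot(\psi_j + \phi_j z) > 0$ for every $j \in \cM_v$, i.e.\ a linear inequality in $z$. The inactivity requirement $\hat{\bm \beta}_j(z)=0$ for $j\notin\cM_v$ is encoded through the subgradient condition $\big|\tilde{X}_{u, j}^\top\big(\tilde{\bm Y}_u(z) - \tilde{X}_{u, \cM_v}\hat{\bm \beta}_{\cM_v}(z)\big)\big| \le \lambda$; substituting the affine expression for $\hat{\bm \beta}_{\cM_v}(z)$ leaves an affine function of $z$ inside the absolute value, so each such condition splits into two linear inequalities in $z$. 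Collecting all of these gives a finite set of linear inequalities whose solution set is exactly $\cZ_v$.

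The main obstacle is justifying that this KKT-based characterization is both necessary and sufficient along the whole line. I would need the invertibility of $\tilde{X}_{u, \cM_v}^\top \tilde{X}_{u, \cM_v}$ (a general-position / full-column-rank assumption on the active design after transportation) so that the active coefficients are uniquely determined, and I would need to verify that the strict-sign and subgradient conditions together certify that the candidate $\big(\hat{\bm \beta}_{\cM_v}(z), \bm 0\big)$ is indeed the unique Lasso minimizer with active set $\cM_v$ and signs $\cS_{\cM_v}$. Care is also needed because $\Omega_u$ is only guaranteed constant for $z\in\cZ_u$; the linearity of the inequalities is therefore asserted on the sub-problem where the transportation is frozen, which is precisely why $\cZ_v$ is intersected with $\cZ_u$ in the definition of $\cZ_{u, v}$.
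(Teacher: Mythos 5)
Your proposal is correct and follows essentially the same route as the paper's proof: both start from the Lasso KKT conditions after the $\cT_u$ transportation, partition them by the active set $\cM_v$, solve explicitly for $\hat{\bm \beta}_{\cM_v}(z)$ and the inactive subgradients, and use the affine dependence $\tilde{\bm Y}_u(z) = \Omega_u(\bm a + \bm b z)$ to reduce the sign and subgradient constraints to linear inequalities in $z$. The only cosmetic difference is that the paper writes the inactive-feature condition as the strict bound $\|\cS_{\cM_{v_c}}\|_\infty < \bm 1$ (which also addresses the uniqueness concern you raise), whereas you state it with a non-strict $\leq \lambda$; this does not change the substance of the argument.
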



The proof is deferred to Appendix \ref{app:proof_cZ_v}. Lemma \ref{lemma:cZ_v} guarantees that, for any $z \in \cZ_v$, the selected features and the signs of their coefficients remain the same when conducting FS after the $\cT_u$ transportation.

In Lemmas \ref{lemma:cZ_u} and \ref{lemma:cZ_v}, we demonstrate that $\cZ_u$ and $\cZ_v$ can be \emph{analytically obtained} by solving the systems of quadratic/linear inequalities, respectively.
Once $\cZ_u$ and $\cZ_v$ are computed, the sub-problem region $\cZ_{u, v}$ in \eq{eq:cZ_sub_problem} is obtained by $\cZ_{u, v}  = \cZ_u \cap \cZ_v$.

\begin{algorithm}[!t]
\caption{\texttt{SFS-DA}}
\label{alg:sfs_da}
\begin{footnotesize}
\textbf{Input:} $X^s, \bm Y^s_{\rm obs}, X^t, \bm Y^t_{\rm obs}, z_{\min}, z_{\max}$

\begin{algorithmic}[1]
\vspace{4pt}
    \STATE $\cM_{\rm obs} \gets$ FS after DA on $(X^s, \bm Y^s_{\rm obs})$ and  $(X^t, \bm Y^t_{\rm obs})$
    \vspace{4pt}
    \FOR{$j \in \cM_{\rm obs}$}
    \vspace{4pt}
        \STATE Compute $\bm \eta_{j} \gets$ Eq. (\ref{eq:eta_j}), $\boldsymbol{a} \text{ and } \boldsymbol{b} \gets$ Eq. (\ref{eq:conditional_data_space_line})
        \vspace{4pt}
        \STATE $X = (X^s ~ X^t)^\top$
        \vspace{4pt}
        \STATE $\cR \gets$ {\tt divide\_and\_conquer} (X, $\boldsymbol{a}, \boldsymbol{b}, z_{\min}, z_{\max}$)
        \vspace{4pt}
        \STATE Identify $\cZ \gets $ Eq. \eq{eq:cZ_new} with $\cR$
        \vspace{4pt}
        \STATE Compute $p^{\rm sel}_{j} \gets$ Eq. (\ref{eq:selective_p_reformulated}) with $\cZ$
        \vspace{4pt}
    \ENDFOR
\end{algorithmic}
\textbf{Output:} $\{p^{\rm sel}_{i}\}_{i \in \mathcal{M}^{\rm obs}}$
\end{footnotesize}
\end{algorithm}

\textbf{Computation of truncation region $\cZ$ by combining multiple sub-problems and algorithm.}
To identify $\cR$ in \eq{eq:cR}, the OT-based DA and Lasso FS after DA are repeatedly applied to a series of datasets
$\bm a + \bm b z$, over a sufficiently wide range off $z \in [z_{\rm min}, z_{\rm max}]$\footnote{We set $z_{\rm min} = -20\sigma$ and $z_{\rm max} = 20 \sigma$, $\sigma$ is the standard deviation of the distribution of the test statistic, because the probability mass outside this range is negligibly small.}.
For simplicity, we consider the case in which $\cZ_u$ is an interval\footnote{If $\cZ_u$ is a union of intervals, we can select the interval containing the data point that we are currently considering.}. 
Since $\cZ_v$ is also an interval, $\cZ_{u, v}$ is an interval.
We denote $\cZ_u = [\ell_u, r_u]$ and $\cZ_{u, v} = [\ell_{u, v}, r_{u, v}]$.
The divide-and-conquer procedure can be summarized in Algorithm \ref{alg:divide_and_conquer}.
After obtaining $\cR$ by Algorithm \ref{alg:divide_and_conquer}.
We can compute $\cZ$ in \eq{eq:cZ_new}, which is subsequently used to obtain the proposed selective $p$-value in \eq{eq:selective_p_reformulated}.
The entire steps of the proposed SFS-DA method is summarized in Algorithm \ref{alg:sfs_da}.

\subsection{Extension to Elastic Net} \label{sec3:extension_elastic_net}

%
In certain cases, adding an $\ell_2$ penalty to the objective function of Lasso yields the elastic net \citep{zou2005regularization}, which helps stabilize the FS results. 
Therefore, we extend our proposed method to elastic net case.
The sub-problem of FS after DA in the elastic net case is similar to that in the Lasso case, i.e., $\cZ_{u, v}  = \cZ_u \cap \cZ_v^{\rm enet}$ with 
$
\cZ_u  = 
	\Big \{ 
	z \in \RR
	\mid 
	\cT_{\bm a + \bm b z} = \cT_u
	\Big \}
$,
$
	\cZ_v^{\rm enet} = 
	\Big \{ 
	z \in \RR
	\mid 
	\cM^{\rm enet}_{\bm a + \bm b z} = \cM^{\rm enet}_v, \cS^{\rm enet}_{\cM_{\bm a + \bm b z}} = \cS^{\rm enet}_{\cM_v}
	\Big \}
$.
%
%
Here, $\cZ_u$ is the same as in the Lasso case, and $\cZ_v^{\rm enet}$ is the region corresponding to the elastic net case, whose characterization is detailed in the following lemma.

\begin{lemma}\label{lemma:cZ_v_elastic_net}
Let us define the elastic net optimization problem after the $\cT_u$ transportation as follows:
\begin{equation*}
	\hat{{\bm \beta}}^{\rm enet}(z) = \argmin \limits_{{\bm \beta} \in \RR^p} 
	\frac{1}{2} 
	\big \|
	\tilde{\bm Y}_u(z) - \tilde{X}_u {\bm \beta}
	\big \|^2_2 
	+ \lambda \|{\bm \beta}\|_1 + \frac{\gamma}{2} \|{\bm \beta}\|^2_2, 
\end{equation*}
 where $\lambda$ and $\gamma$ are the regularization parameters, 
 $\tilde{\bm Y}_u(z)$ and $\tilde{X}_u$ are defined in Lemma \ref{lemma:cZ_v}.
Then, the set $\cZ_v^{\rm enet}$ can be identified as follows:
\begin{align*}
	\cZ_v^{\rm enet} = 
	\left \{ 
		z \in \RR ~\Bigg |
		\begin{array}{l}
			\hat{{\bm \beta}}^{\rm enet}_j(z) \neq 0, ~\forall j \in \cM_v^{\rm enet}, \\
			\hat{{\bm \beta}}^{\rm enet}_j(z) = 0, ~\forall j \not \in \cM_v^{\rm enet}, \\
			{\rm sign}\Big (\hat{{\bm \beta}}^{\rm enet}_{\cM_v}(z) \Big ) = \cS_{\cM^{\rm enet}_v}
		\end{array}
	\right \},
\end{align*}
which can be efficiently computed by solving a set of linear inequalities w.r.t $z$.

\end{lemma}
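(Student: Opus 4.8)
The plan is to mirror the Lasso argument of Lemma \ref{lemma:cZ_v}, exploiting the Karush–Kuhn–Tucker (KKT) optimality conditions, now for the strongly convex elastic net objective. First I would write the stationarity condition: since the $\ell_2$ term contributes $\gamma \bm\beta$ to the gradient, the optimum $\hat{\bm\beta}^{\rm enet}(z)$ satisfies
\[
-\tilde{X}_u^\top \big(\tilde{\bm Y}_u(z) - \tilde{X}_u \hat{\bm\beta}^{\rm enet}(z)\big) + \gamma\, \hat{\bm\beta}^{\rm enet}(z) + \lambda \bm s = \bm 0,
\]
where $\bm s$ is a subgradient of $\|\cdot\|_1$, i.e. $s_j = {\rm sign}(\hat{\beta}^{\rm enet}_j(z))$ on the support and $s_j \in [-1,1]$ off it. Because $\gamma > 0$ makes the objective strongly convex, the minimiser is unique and these conditions characterise it exactly.

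Next I would condition on the active set $M = \cM_v^{\rm enet}$ with signs $\bm s_M = \cS_{\cM_v^{\rm enet}}$, and split the stationarity equation into its $M$ and $M^c$ blocks. Setting $\hat{\bm\beta}^{\rm enet}_{M^c} = \bm 0$, the $M$-block admits the closed form
\[
\hat{\bm\beta}^{\rm enet}_M(z) = \big(\tilde{X}_{u,M}^\top \tilde{X}_{u,M} + \gamma I\big)^{-1} \big(\tilde{X}_{u,M}^\top \tilde{\bm Y}_u(z) - \lambda \bm s_M\big).
\]
The crucial observation, exactly as in the Lasso case, is that $\tilde{\bm Y}_u(z) = \Omega_u \bm Y(z) = \Omega_u(\bm a + \bm b z)$ is \emph{affine} in $z$; hence $\hat{\bm\beta}^{\rm enet}_M(z) = \bm\psi_0 + \bm\psi_1 z$ for fixed vectors $\bm\psi_0, \bm\psi_1$. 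The sole departure from Lemma \ref{lemma:cZ_v} is the extra $\gamma I$ inside the inverse, which preserves this affine structure.

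Then I would translate each membership condition of $\cZ_v^{\rm enet}$ into linear inequalities in $z$. The constraint ${\rm sign}(\hat{\bm\beta}^{\rm enet}_M(z)) = \bm s_M$ becomes $(\bm s_M)_j \cdot (\bm\psi_0 + \bm\psi_1 z)_j > 0$ for each $j \in M$, which are linear in $z$ and simultaneously enforce $\hat{\beta}^{\rm enet}_j(z) \neq 0$. For the inactive features, substituting $\hat{\bm\beta}^{\rm enet}_M(z)$ into the $M^c$-block gives the dual-feasibility bound
\[
\Big| \big[\tilde{X}_{u,M^c}^\top \big(\tilde{\bm Y}_u(z) - \tilde{X}_{u,M}\hat{\bm\beta}^{\rm enet}_M(z)\big)\big]_j \Big| \leq \lambda, \quad \forall j \in M^c,
\]
and since the bracketed quantity is affine in $z$, each absolute-value bound splits into two linear inequalities. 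Collecting all of these exhibits $\cZ_v^{\rm enet}$ as the solution set of a finite system of linear inequalities, hence an interval (or union of intervals), which proves the claim.

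I do not anticipate a genuine obstacle: the argument is a near-verbatim adaptation of the Lasso proof, and the $\ell_2$ penalty actually \emph{simplifies} matters, since $\tilde{X}_{u,M}^\top \tilde{X}_{u,M} + \gamma I$ is always positive definite for $\gamma > 0$, removing the full-column-rank requirement of the pure Lasso. The one point deserving care is that the closed form for $\hat{\bm\beta}^{\rm enet}_M(z)$ is the true optimum only while the sign and dual-feasibility inequalities hold; verifying that the intersection of the derived inequalities exactly describes the region on which the assumed active set and signs coincide with the realised ones is what makes the characterisation valid.
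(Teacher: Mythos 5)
Your proposal is correct and follows essentially the same route as the paper's proof: writing the elastic-net KKT conditions, partitioning by the active set and signs, solving for $\hat{\bm \beta}^{\rm enet}_{\cM_v^{\rm enet}}(z)$ and the inactive-set subgradient in closed form with the extra $\gamma I$ inside the inverse, and using the affine dependence of $\tilde{\bm Y}_u(z)$ on $z$ to reduce the sign and dual-feasibility constraints to linear inequalities. The only cosmetic difference is that the paper states the dual-feasibility bound as the strict condition $\|\cS_{\cM_{v_c}^{\rm enet}}\|_\infty < \bm 1$ rather than your $|\cdot| \leq \lambda$ form, which is immaterial.
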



The proof of Lemma \ref{lemma:cZ_v_elastic_net} is deferred to Appendix \ref{app:proof_cZ_v_elastic_net}.

\begin{algorithm}[!t]
\renewcommand{\algorithmicrequire}{\textbf{Input:}}
\renewcommand{\algorithmicensure}{\textbf{Output:}}
\begin{footnotesize}
 \begin{algorithmic}[1]
  \REQUIRE $X, \bm a, \bm b, z_{\rm min}, z_{\rm max}$
	\vspace{4pt}
	\STATE Initialization: $u = 1$, $v = 1$, $z_{u, v}  = z_{\rm min}$, $\cR = \emptyset$
	\vspace{4pt}
	\WHILE {$z_{u, v} < z_{\rm max}$}
		\vspace{4pt}
		\STATE $\cT_u \leftarrow$ DA on $\bm a + \bm b z_{u, v}$
		\vspace{4pt}
		\STATE Compute $[\ell_u, r_u] = \cZ_u \leftarrow$ Lemma \ref{lemma:cZ_u} 
		\vspace{4pt}
		\STATE $r_{u, v} = \ell_u$
		\vspace{4pt}
		\WHILE {$r_{u, v} < r_u$}
		\vspace{4pt}
		\STATE $\tilde{X}_u, \tilde{\bm Y}_u(z_{u, v}) \gets $ Lemma \ref{lemma:cZ_v}
		\vspace{4pt}
		\STATE $\cM_v$ and $\cS_{\cM_v} \leftarrow$ FS after DA on $\big (\tilde{X}_u, \tilde{\bm Y}_u(z_{u, v}) \big )$
		\vspace{4pt}
		\STATE $\cZ_v \leftarrow$ Lemma \ref{lemma:cZ_v} 
		\vspace{4pt}
		\STATE $[\ell_{u, v}, r_{u, v}] = \cZ_{u, v} \leftarrow \cZ_u \cap \cZ_v$ 
		\vspace{6pt}
		\STATE $\cR \leftarrow \cR \cup \{ (u, v)\} $ \textbf{if} $\cM_v = \cM_{\rm obs}$
		\vspace{4pt}
		\STATE $v \leftarrow v + 1$, $z_{u, v} = r_{u, v}$
		%
		%
		%
		\vspace{4pt}
		\ENDWHILE	
		\vspace{4pt}
		\STATE $v \leftarrow 1$, $u \leftarrow u + 1$, $z_{u, v} = r_{u, v}$
		\vspace{4pt}
	\ENDWHILE
	\vspace{2pt}
  \ENSURE $\cR$ 
 \end{algorithmic}
\end{footnotesize}
\caption{{\tt divide\_and\_conquer}}
\label{alg:divide_and_conquer}
\end{algorithm}

\section{Experiment} \label{sec:experment}

We demonstrate the performance of the proposed SFS-DA.
Here, we present the main results. Several additional experiments can be found in Appendix \ref{app:additional_experiment}.

\subsection{Experimental Setup}

\textbf{Methods for comparison.} We compared the performance of the following methods:

$\bullet$ {\tt SFS-DA}: proposed method
    
$\bullet$ {\tt SFA-DA-oc}: proposed method, which considers only one sub-problem, i.e., over-conditioning, described in \S\ref{subsec:identification_cZ} (extension of \cite{lee2016exact} to our setting)

$\bullet$ {\tt DS:} data splitting 

$\bullet$ {\tt Bonferroni:} the most popular multiple testing

$\bullet$ {\tt Naive:} traditional statistical inference.

$\bullet$ {\tt No inference:} FS after DA without inference.

We note that if a method fails to control the FPR at  $\alpha$, it is \textit{invalid}, and its TPR becomes irrelevant. A method with a high TPR implies a low FNR.

\textbf{Synthetic data generation.} We generated $\bm Y^s$ with 
$\bm Y^s_i = {X^s_i}^\top \bm \beta^s + \veps$,
$X^s_i \sim \mathbb{N}(\bm 0, I_p), \forall i \in [n_s]$, and $\veps \sim \mathbb{N}(0, 1)$.
Similarly, $\bm Y^t$ is generated with 
$\bm Y^t_i = {X^t_i}^\top \bm \beta^t + \veps$ in which
$X^t_i \sim \mathbb{N}(\bm 0, I_p)$.
We set $p = 5$, $\lambda = 10$, $\gamma = 1$ (elastic net), and $\alpha = 0.05$. 
For the FPR experiments, all elements of $\boldsymbol{\beta}^t$ were set to 0 and $n_s \in \{50, 100, 150, 200\}$. For the TPR experiments, all elements of $\boldsymbol{\beta}^t$ were set to 0.5 and $n_s = 100$. 
We set $n_t = 10$, indicating that the target data is limited.
In all experiments, elements of $\bm \beta^s$ are set to 2. 
Note that we only conduct the inference on the target data. Therefore, the values of $\bm \beta^s$ do not affect the inference.
Each experiment was repeated 120 times.

\subsection{Numerical results}
\textbf{The results of FPRs and TPRs.}
The results of FPR and TPR in two cases of Lasso and elastic net are shown in Figs.  \ref{fig:lasso_fpr_tpr} and \ref{fig:elastic_net_fpr_tpr}. In the plots on the left, the {\tt SFS-DA}, {\tt SFS-DA-oc}, {\tt Bonferroni}, {\tt DS} controlled the FPR whereas the {\tt Naive} and {\tt No Inference} \textit{could not}. Because the {\tt Naive} and {\tt No Inference} failed to control the FPR, we no longer considered their TPRs. 
In the plots on the right, the {\tt SFS-DA} has the highest TPR compared to other methods in all the cases, i.e., the {\tt SFS-DA} has the lowest FNR.

\begin{figure}[!t]
    \centering
    \begin{subfigure}[b]{0.47\linewidth}  
        \includegraphics[width=\linewidth]{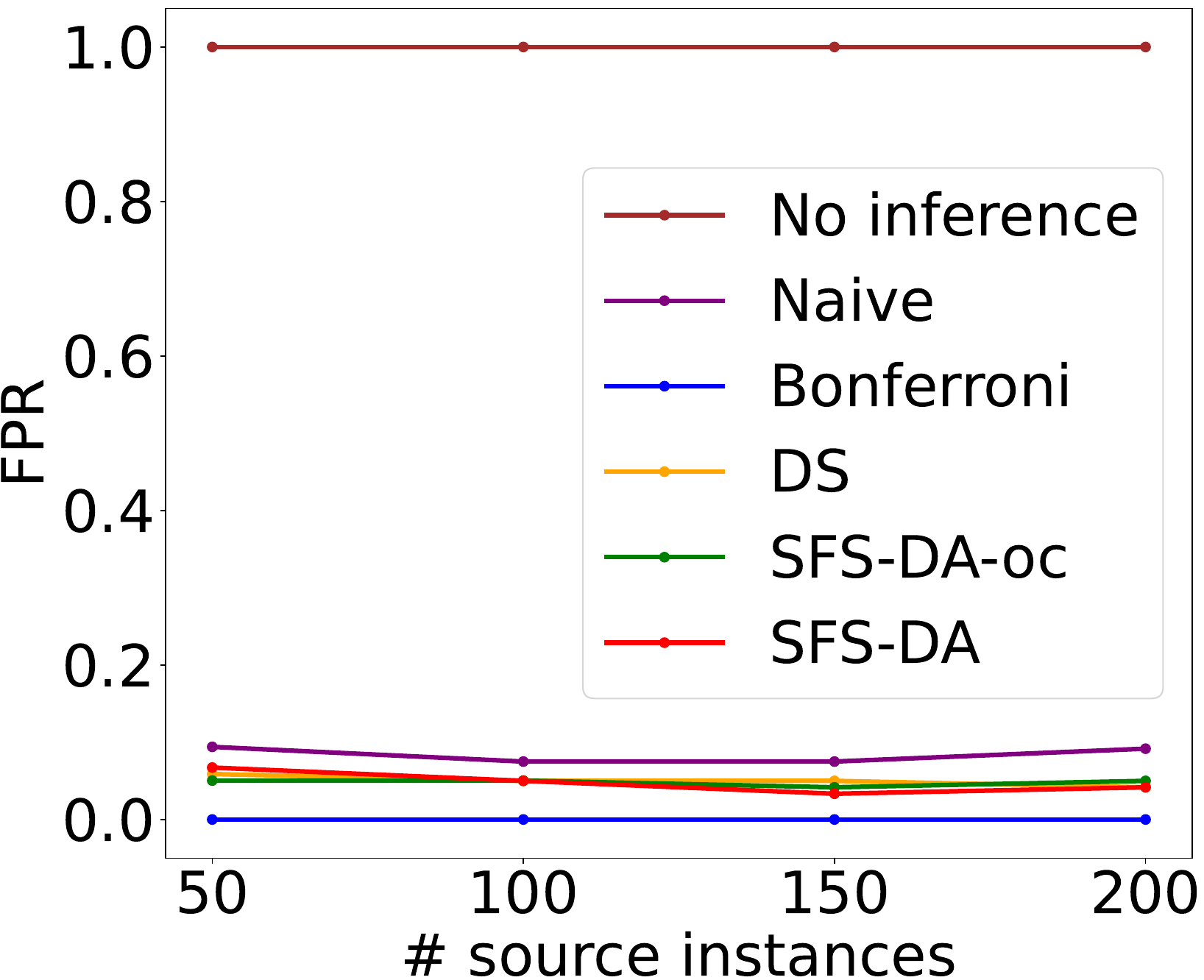}  
        \caption{FPR}
        \label{fig:fpr_univariate}
    \end{subfigure}
    \hspace{0.02\linewidth}  
    \begin{subfigure}[b]{0.47\linewidth}
        \includegraphics[width=\linewidth]{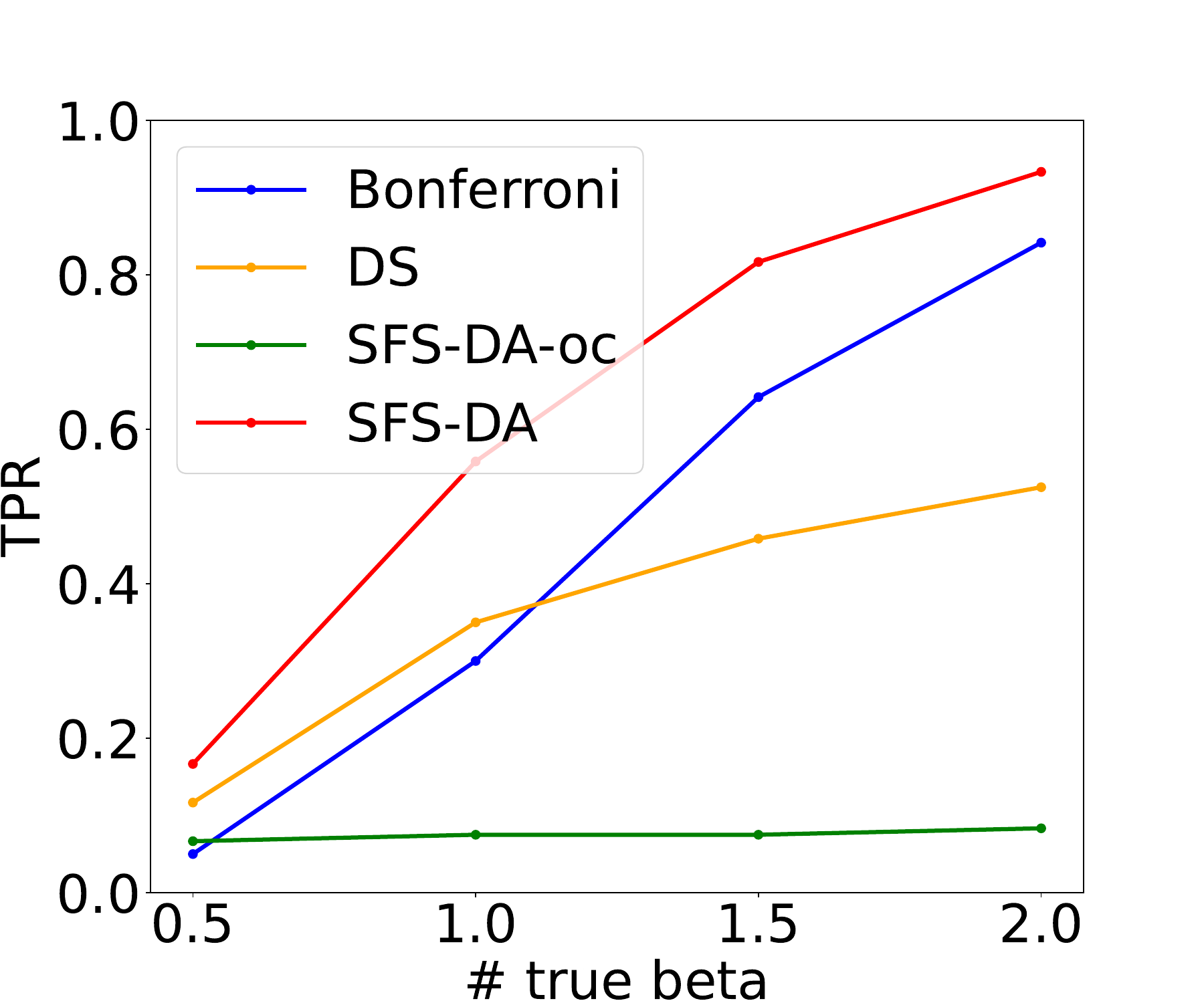}
        \caption{TPR}
        \label{fig:tpr_univariate}
    \end{subfigure}
    \vspace{-4pt}
    \caption{FPR and TPR in the case of Lasso}
    \label{fig:lasso_fpr_tpr}
    \vspace{-8pt}
\end{figure}

\begin{figure}[!t]
    \centering
    \begin{subfigure}[b]{0.47\linewidth}
        \includegraphics[width=\linewidth]{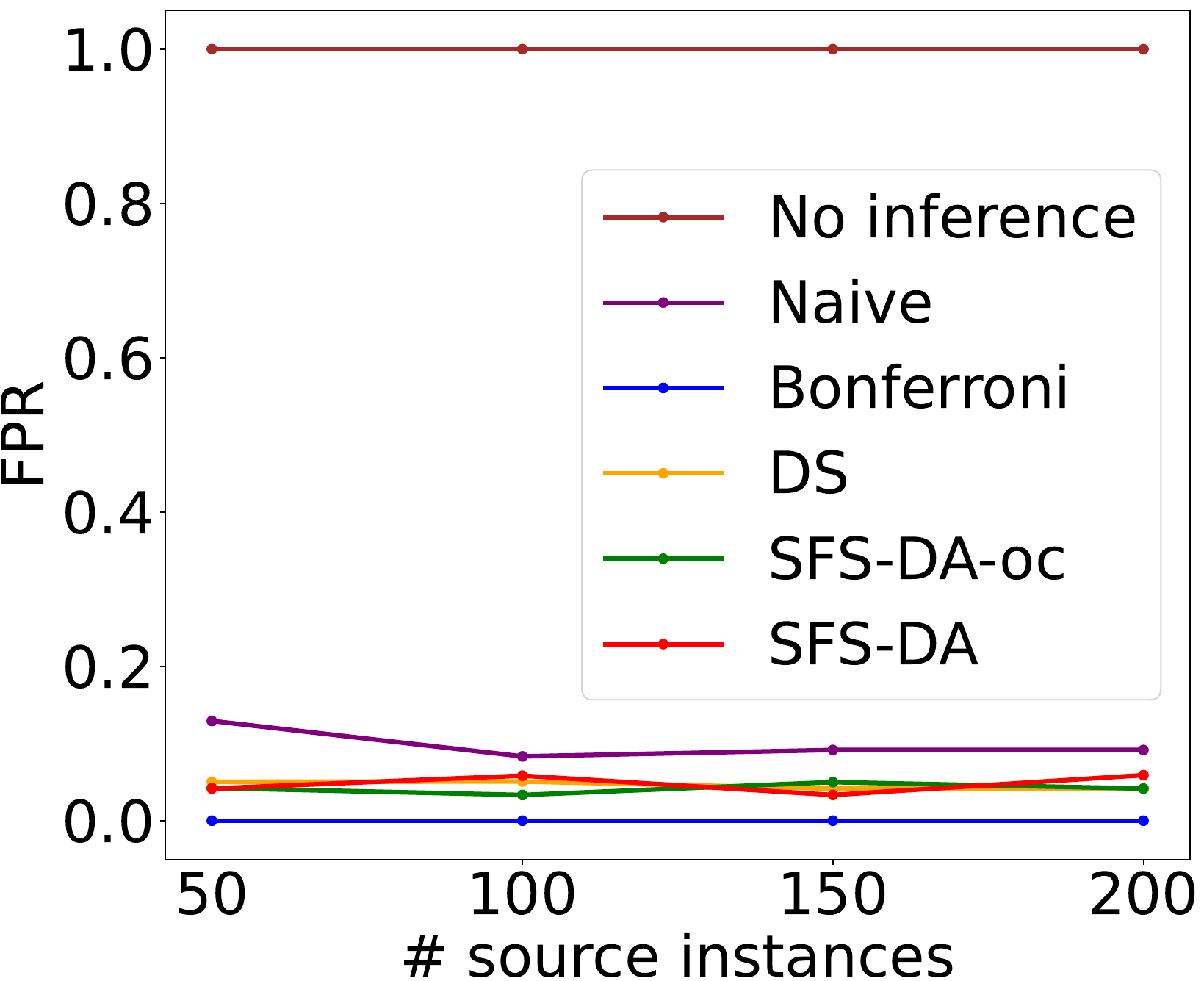}
        \caption{FPR}
        \label{fig:fpr_elastic net}
    \end{subfigure}
    \hspace{0.02\linewidth}
    \begin{subfigure}[b]{0.47\linewidth}
        \includegraphics[width=\linewidth]{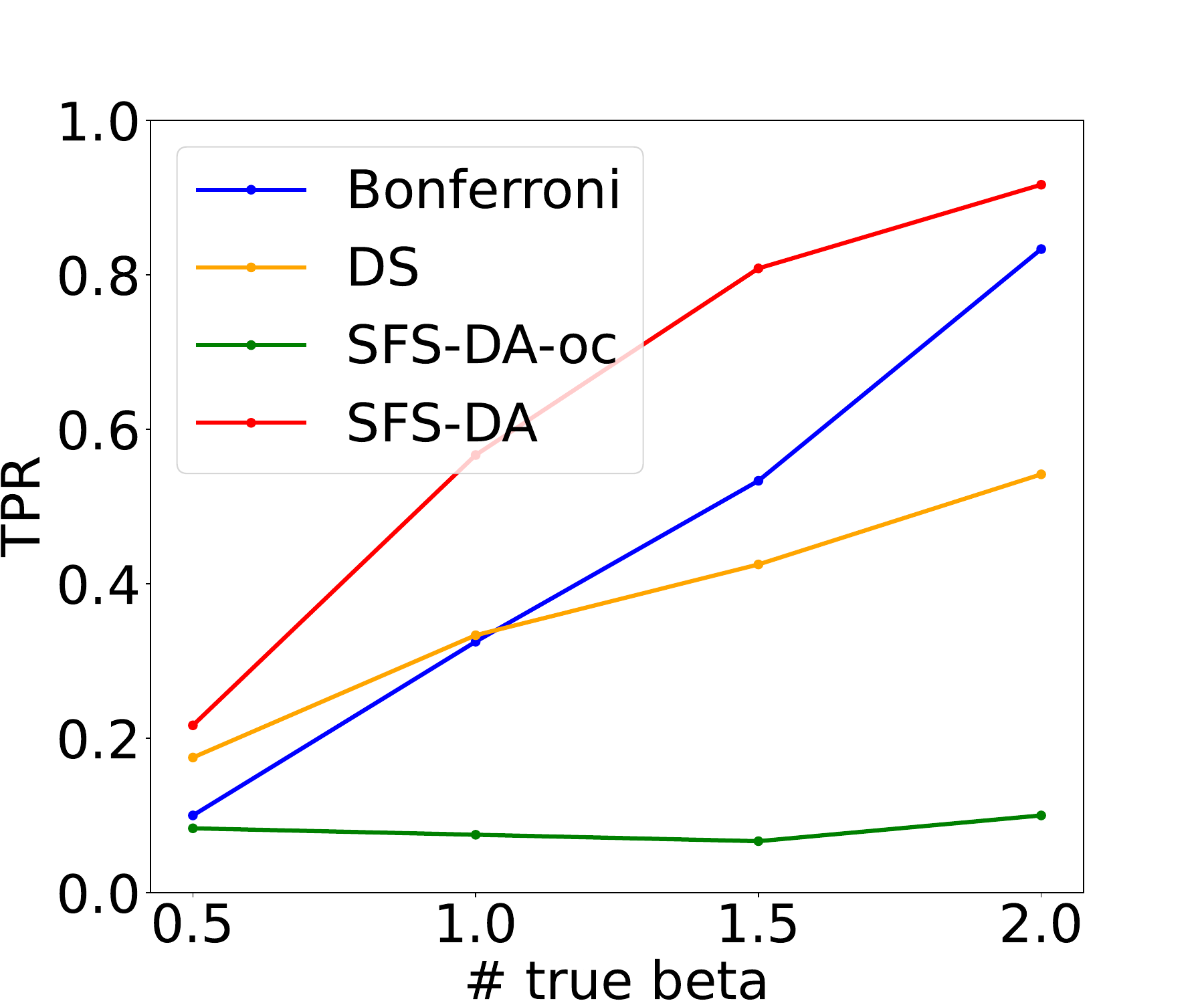}
        \caption{TPR}
        \label{fig:tpr_elastic net}
    \end{subfigure}
    \vspace{-4pt}
    \caption{FPR and TPR in the case of elastic net}
    \label{fig:elastic_net_fpr_tpr}
    \vspace{-8pt}
\end{figure}

\begin{figure}[!t]
    \centering
    \begin{subfigure}[b]{0.47\linewidth}
        \includegraphics[width=\linewidth]{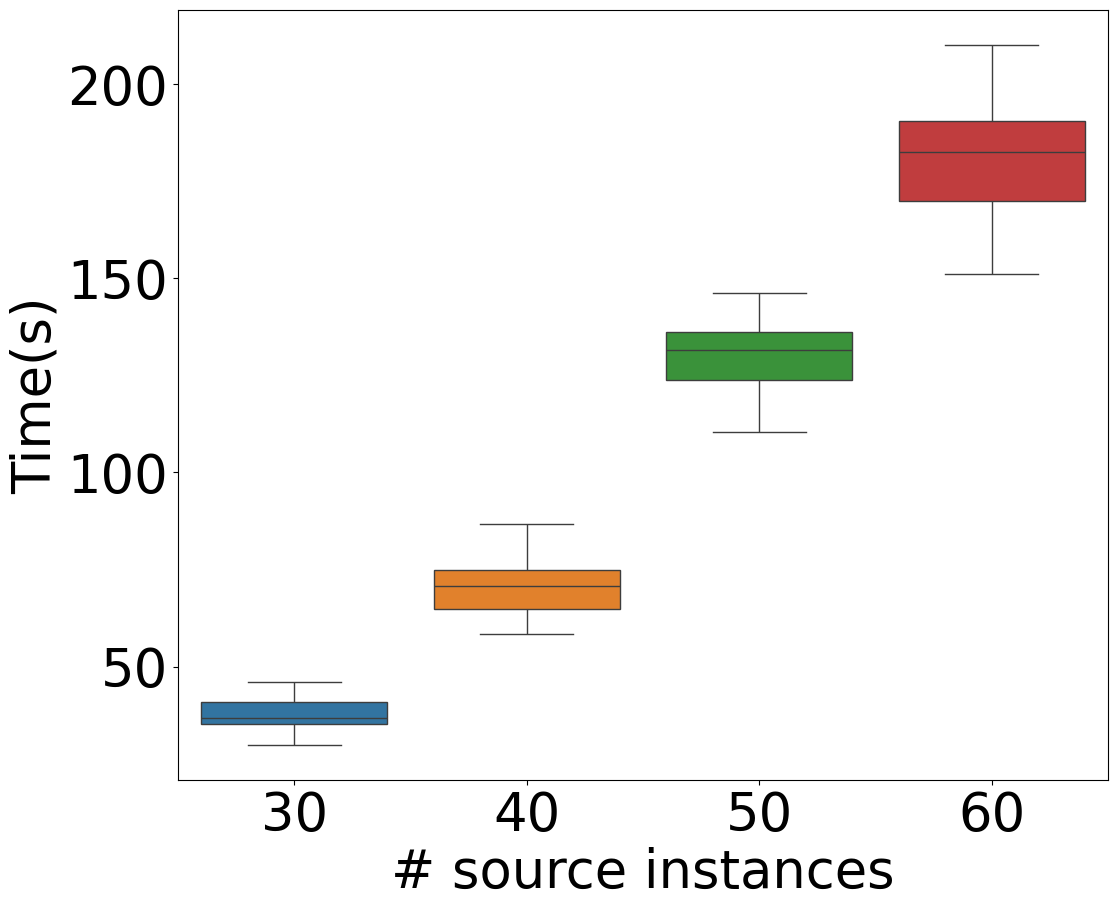}
        \caption{Computational time}
        \label{fig:time}
    \end{subfigure}
    \hspace{0.02\linewidth}
    \begin{subfigure}[b]{0.47\linewidth}
        \includegraphics[width=\linewidth]{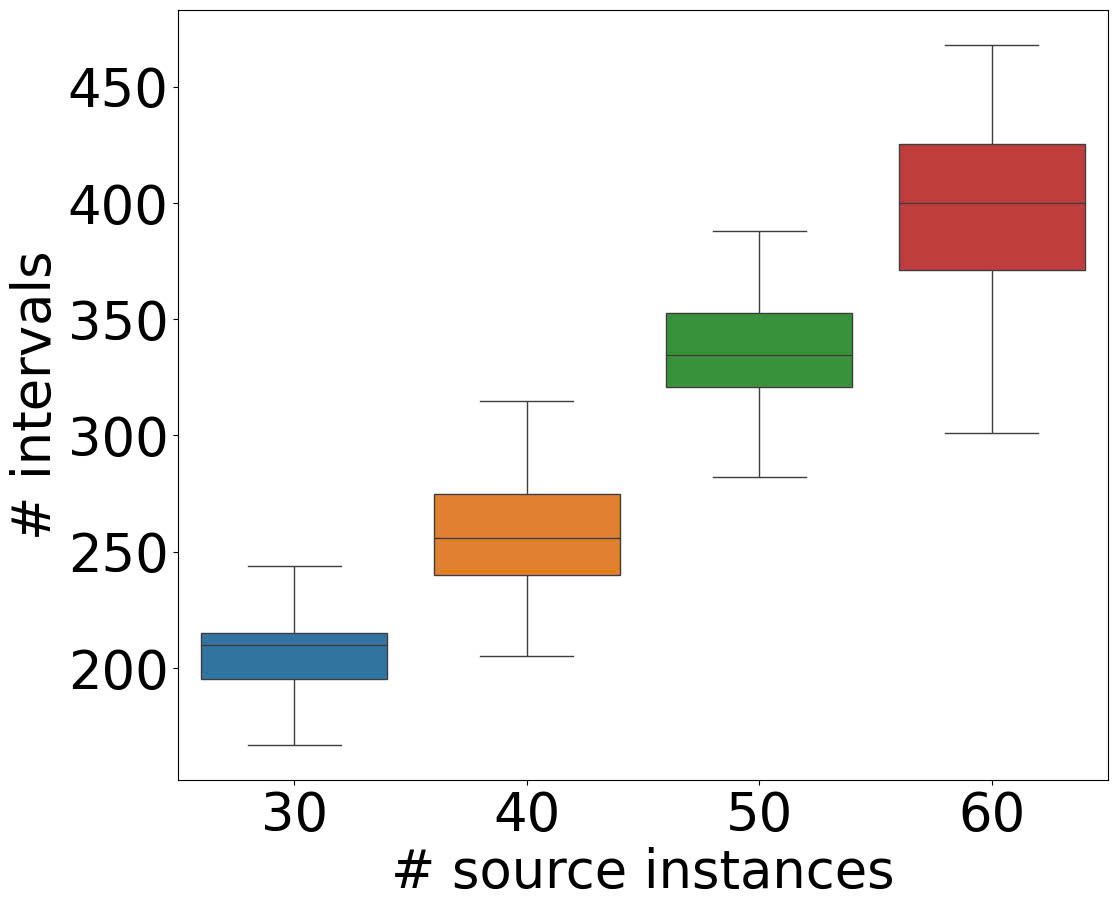}
        \caption{Encountered intervals}
        \label{fig:polytope}
    \end{subfigure}
    \vspace{-4pt}
    \caption{Computational cost of the proposed SFS-DA}
    \label{fig:computation_cost}
    \vspace{-8pt}
\end{figure}

\textbf{Computational time.} In Figure \ref{fig:computation_cost}, we show the boxplots of the time for computing each \textit{p}-value as well as actual number of intervals of $z$ that we encountered on the line when constructing the truncation region $\mathcal{Z}$ w.r.t. $n_s$. The plots demonstrate that the complexity of the {\tt SFS-DA} increases linearly w.r.t. $n_s$.

\subsection{Results on Real-World Datasets}
We performed comparison on five real-world datasets.
In this section, we present the experimental results for three datasets: the Diabetes dataset \citep{efron2004least}, the Heart Failure dataset, and the Seoul Bike dataset, all available in the UCI Machine Learning Repository. 
The results for the remaining two datasets can be found in Appendix \ref{app:additional_experiment}. 
For each dataset, we present the distribution of $p$-value for each feature. 
For each dataset, we randomly selected instances from source and target domain, with $n_s = 100$ and $n_t = 20$. 
We used Lasso for FS. 
The results are shown in Figs. \ref{fig:diabetes}, \ref{fig:heart failure}, \ref{fig:Seoul Bike}. 
The $p$-values of {\tt Bonferroni} are equal to one in almost all cases, indicating that this method is \emph{conservative}.
While the \textit{p}-value of {\tt DS} is smaller than that of {\tt SFS-DA} in a few cases (S5 in Diabetes dataset and Temperature in Seoul Bike dataset), in all remaining cases, the \textit{p}-value of the proposed {\tt SFS-DA} tends to be smaller than those of the competitors, demonstrating that {\tt SFS-DA} exhibits the highest statistical power.

\begin{figure}[!t]
    \centering
    \includegraphics[width=\linewidth]{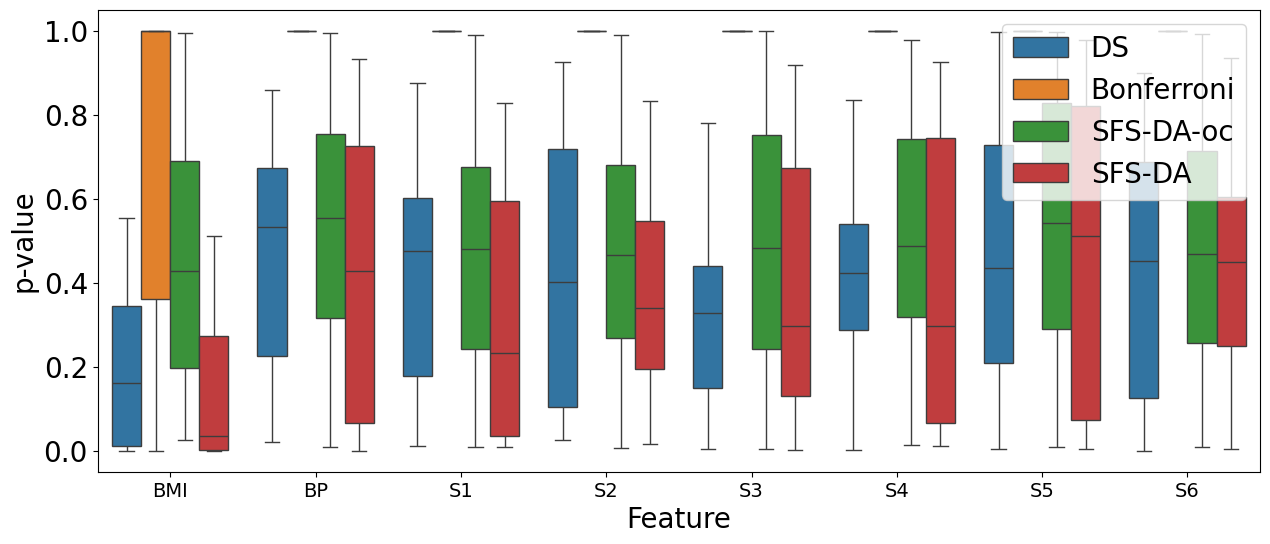}
    \vspace{-17pt}
    \caption{Diabetes dataset. The source domain consists of ``people over 50 years old'', while the target domain consists of ``people under 50 years old''.} 
    \label{fig:diabetes}
    \vspace{-8pt}
\end{figure}

\begin{figure}[!t]
    \centering
\includegraphics[width=\linewidth]{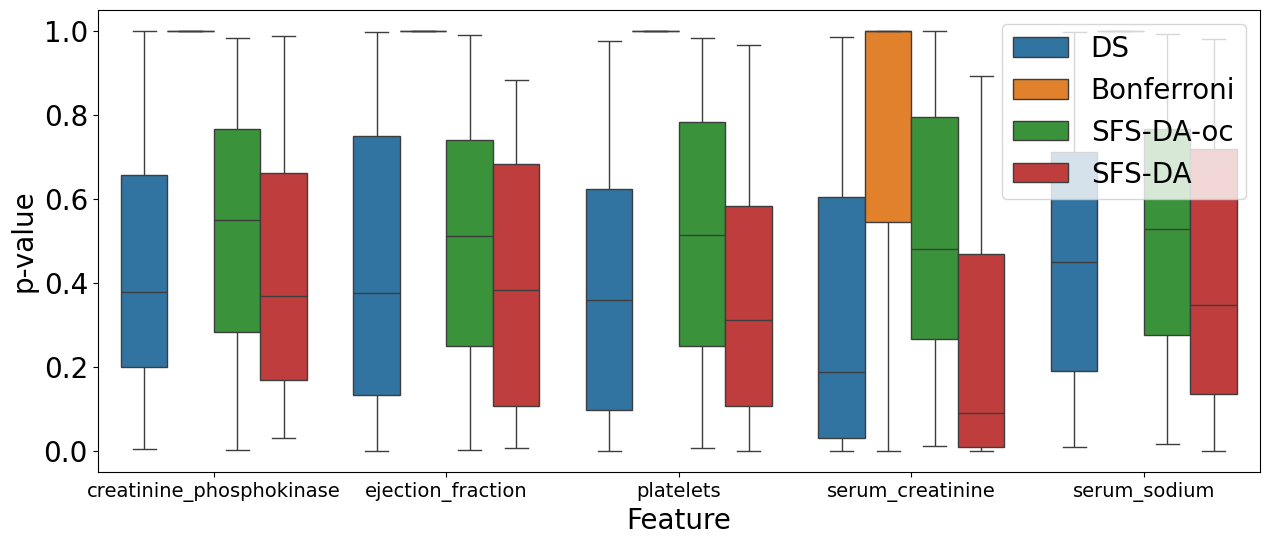}
\vspace{-17pt}
    \caption{Heart Failure dataset. The settings for the source and target domains are similar to those in Diabetes dataset.}
    \label{fig:heart failure}
    \vspace{-8pt}
\end{figure}

\begin{figure}[!t]
    \centering
    \includegraphics[width=\linewidth]{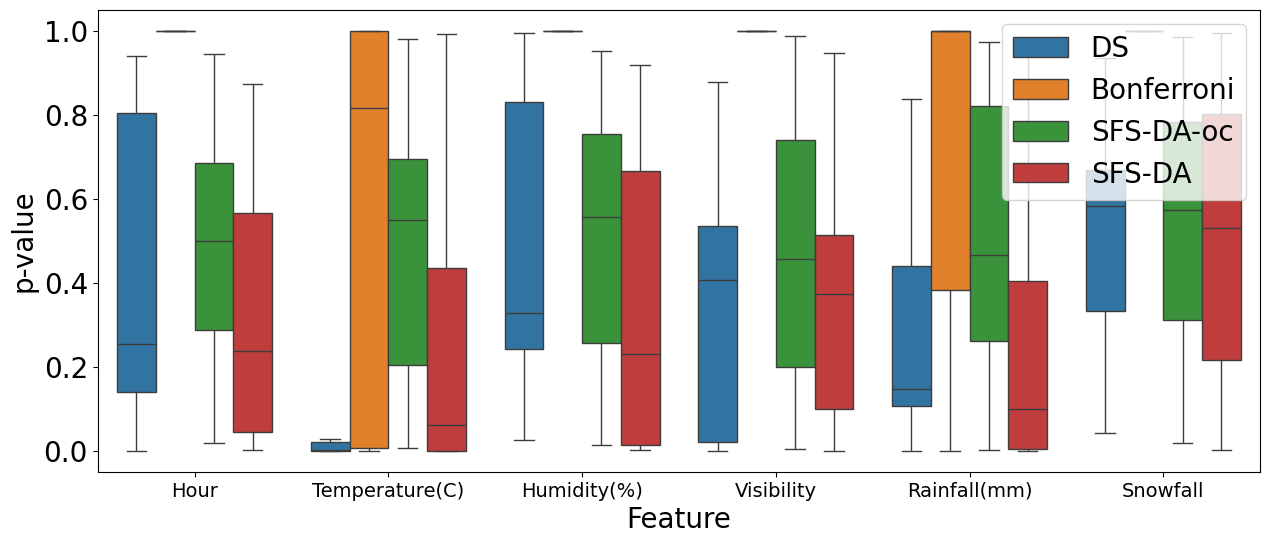}
    \vspace{-17pt}
    \caption{Seoul Bike dataset. The source domain is ``people who rent bikes on regular days'', while the target domain is ``people who rent bikes on holidays''.}
    \label{fig:Seoul Bike}
    \vspace{-15pt}
\end{figure}

%
%
\section{Discussion} \label{sec:discussion}

We propose a novel setup for testing the results of FS after DA and introduce a method for computing a valid $p$-value to conduct statistical tests. 
This method leverages the SI framework and employs a divide-and-conquer approach to efficiently compute the $p$-value.
We believe this study represents a significant step toward controllable machine learning in the context of DA.
Our approach is also applicable to other feature selection algorithms where the selection event can be characterized by sets of linear or quadratic inequalities (e.g., stepwise feature selection) within the context of FS after DA. Extending the proposed method to more complex FS algorithms would represent a valuable contribution for future research.

\bibliographystyle{abbrvnat}
\bibliography{ref}

\newpage
\onecolumn

\section{Appendix}
\label{sec:appendix}

\subsection{Proof of Lemma \ref{lemma:valid_selective_p}} \label{app:proof_valid_p}

We have 
\begin{align*}
    \bm \eta_j^\top {\bm Y^s \choose \bm Y^t } \Big| \Big \{ 
	\cM_{\bm Y^s, \bm Y^t}
	=
	\cM_{\rm obs}, ~
	\cQ_{\bm Y^s, \bm Y^t}
	=
	\cQ_{\rm obs} \Big \} 
    \sim TN\left( 
            \bm \eta_j^\top {\bm \mu^s \choose \bm \mu^t
            }, \bm \eta_j^\top \Sigma \bm \eta_j, \cZ
            \right),
\end{align*}

which is a truncated normal distribution with mean ${\bm \eta}_j^\top {\bm \mu^s \choose \bm \mu^t}$, variance $\eta_j^\top \Sigma \eta_j$, in which $\Sigma = \begin{pmatrix}
	\Sigma^s & 0 \\ 
	0 & \Sigma^t
\end{pmatrix}$, and the truncation region $\cZ$ described in \S\ref{subsec:identification_cZ}. Therefore, under null hypothesis,

\begin{align*}
    p^{\rm sel}_j \ \Big| \ \Big \{ 
	\cM_{\bm Y^s, \bm Y^t}
	=
	\cM_{\rm obs}, ~
	\cQ_{\bm Y^s, \bm Y^t}
	=
	\cQ_{\rm obs} \Big \}   
    \sim \text{Unif}(0,1)
\end{align*}
Thus, $\mathbb{P}_{\rm H_{0, j}} \left( p^{\rm sel}_j \ \Big| \ 
	\cM_{\bm Y^s, \bm Y^t}
	=
	\cM_{\rm obs}, ~
	\cQ_{\bm Y^s, \bm Y^t}
	=
	\cQ_{\rm obs} 
\right) = \alpha, \forall \alpha \in [0,1]$.

Next, we have
\begin{align*}
    & \mathbb{P}_{\rm H_{0, j}} \left( p^{\rm sel}_j \ \Big| \ 
	\cM_{\bm Y^s, \bm Y^t}
	=
	\cM_{\rm obs}
\right)\\
    &= \int \mathbb{P}_{H_{0,j}} \left( p_j^{\rm sel} \leq \alpha \, \middle| \, \cM_{\bm Y^s, \bm Y^t} = \cM_{\rm obs}, \, \cQ_{\bm Y^s, \bm Y^t} = \cQ_{\rm obs} \right) \mathbb{P}_{H_{0,j}} \left( \cQ_{\bm Y^s, \bm Y^t} = \cQ_{\rm obs} \, \middle| \, \cM_{\bm Y^s, \bm Y^t} = \cM_{\rm obs} \right) \, d \cQ_{\rm obs} \\ 
    &= \int \alpha \, \mathbb{P}_{H_{0,j}} \left( \cQ_{\bm Y^s, \bm Y^t} = \cQ_{\rm obs} \, \middle| \, \cM_{\bm Y^s, \bm Y^t} = \cM_{\rm obs} \right) \, d\cQ_{\rm obs} \\ 
    &= \alpha \int \mathbb{P}_{H_{0,j}} \left( \cQ_{\bm Y^s, \bm Y^t} = \cQ_{\rm obs} \, \middle| \, \cM_{\bm Y^s, \bm Y^t} = \cM_{\rm obs} \right) \, d\cQ_{\rm obs} \\ 
    &= \alpha. 
\end{align*}

Finally, we obtain the result in Lemma \ref{lemma:valid_selective_p} as follows:
\begin{align*}
    \mathbb{P}_{\rm H_{0, j}} \left( p^{\rm sel}_j \leq \alpha
    \right) 
    &= \sum_{\cM_{\rm obs}} \mathbb{P}_{H_{0,j}} \left( p_j^{\rm sel} \leq \alpha \, \middle| \, \cM_{\bm Y^s, \bm Y^t} = \cM_{\rm obs} \right) \mathbb{P}_{H_{0,j}} \left( \cM_{\bm Y^s, \bm Y^t} = \cM_{\rm obs} \right)\\
    &= \sum_{\cM_{\rm obs}} \alpha \, \mathbb{P}_{H_{0,j}} \left( \cM_{\bm Y^s, \bm Y^t} = \cM_{\rm obs} \right) \\
    &= \alpha \sum_{\cM_{\rm obs}} \, \mathbb{P}_{H_{0,j}} \left( \cM_{\bm Y^s, \bm Y^t} = \cM_{\rm obs} \right) \\
    &= \alpha.
\end{align*}

\subsection{Proof of Lemma \ref{lemma:data_line}} \label{app:proof_line}
Base on the second condition in (\ref{eq:conditional_data_space}), we have
\begin{align*}
    \cQ_{\bm Y^s, \bm Y^t} &= \cQ_{obs} \\
    \Leftrightarrow \left( I_{n_s + n_t} - \bm b \bm \eta_j^\top \right) {\bm Y^s \choose \bm Y^t } &= \cQ_{obs} \\
    \Leftrightarrow {\bm Y^s \choose \bm Y^t } &= \cQ_{obs} + \bm b \bm \eta_j^\top  {\bm Y^s \choose \bm Y^t }.
\end{align*}
By defining $\bm a= \cQ_{obs}, z = \bm \eta_j^\top{\bm Y^s \choose \bm Y^t }$, and incorporating the second condition of (\ref{eq:conditional_data_space}), we obtain Lemma \ref{lemma:data_line}.

\subsection{Proof of Lemma \ref{lemma:cZ_u}} \label{app:proof_cZ_u}
The proof is constructed based on the results presented in \cite{le2024cad}, in which the authors introduced an approach to characterize the event of OT by using the concept of \textit{parametric linear programming}. Let us re-written the OT problem between the source and target domain in (\ref{eq:ot_problem}) as:
\[
    \hat{\boldsymbol{t}} = \underset{\boldsymbol{t} \in \mathbb{R}^{n_s  n_t}} {\arg\min}  \ \boldsymbol{t}^\top \boldsymbol{c} \left(D^s, D^t\right) 
\]
\[
    \qquad \text{s.t.} \quad H \boldsymbol{t} = \boldsymbol{h}, \boldsymbol{t} \geq 0,
\]

where $\boldsymbol{t} =  \text{vec}(T), \boldsymbol{c} \left(D^s, D^t\right) = \text{vec} \left(C \left(D^s, D^t\right) \right) = {\bm c'} +
\left[ \Theta \begin{pmatrix}
    \bm Y^s \\
    \bm Y^t
\end{pmatrix} \right] \circ \left[ \Theta \begin{pmatrix}
    \bm Y^s \\
    \bm Y^t
\end{pmatrix} \right]$,
\[
    {\bm c'} = \text{vec} \left(\Big[
	\big \| X_i^s - X_j^t \big \|^2_2 
	\Big]_{ij}\right) \in \RR^{n_sn_t},
\]
\[
    \Theta = \text{hstack}\left(I_{n_s} \otimes \mathbf{1}_{n_t}, - \mathbf{1}_{n_s} \otimes I_{n_t}\right) \in \mathbb{R}^{n_sn_t \times (n_s + n_t)},
\]
the cost vector ${\bm c'}$ once computed from $X^s$ and $X^t$ remains fixed, vec($\cdot$) is an operator that transforms a matrix into a vector with concatenated rows, the operator $\circ$ is element-wise product, hstack($\cdot,\cdot$) is horizontal stack operation, the operator $\otimes$ is Kronecker product, $I_{n} \in \mathbb{R}^{n \times n}$ is the identity matrix, and $\mathbf{1}_{m} \in \mathbb{R}^{m}$ is a vector of ones. The matrix $H$ is defined  as $H = \begin{pmatrix}
    H_r & H_c
\end{pmatrix}^\top \in \mathbb{R}^{(n_s+n_t) \times n_sn_t}$ in which

\[
H_r = 
\begin{bmatrix}
1 & \dots & 1 & 0 & \dots & 0 & \dots & 0 & \dots & 0 \\
0 & \dots & 0 & 1 & \dots & 1 & \dots & 0 & \dots & 0 \\
\vdots & \dots & \vdots & \vdots & \dots & \vdots & \dots & \vdots & \dots & \vdots \\
0 & \dots & 0 & 0 & \dots & 0 & \dots & 1 & \dots & 1 \\
\end{bmatrix}
\in \mathbb{R}^{n_s \times n_s n_t}
\]

that performs the sum over the rows of $T$ and
\[
    H_c = \begin{bmatrix}
I_{n_t} & I_{n_t} & \dots & I_{n_t}
\end{bmatrix} \in \mathbb{R}^{n_t \times n_s n_t}
\]
that performs the sum over the columns of $T$, and $h = \left(\frac{\mathbf{1}_{n_s}}{n_s}, \frac{\mathbf{1}_{n_t}}{n_t}\right)^\top \in \mathbb{R}^{n_s + n_t}.$

Next, we consider the OT problem with the parametrized data $\boldsymbol{a} + \boldsymbol{b}z:$

\begin{align*}
    &\min_{{\bm t} \in \mathbb{R}^{n_sn_t}} {\bm t}^T \left[ ({\bm c'} + \Theta(\boldsymbol{a} + \boldsymbol{b}z) ) \circ ({\bm c'} + \Theta(\boldsymbol{a} + \boldsymbol{b}z) ) \right] \quad \text{s.t.} \quad H{\bm t} = h, \quad {\bm t} \geq 0, \\
    \Leftrightarrow &\min_{{\bm t} \in \mathbb{R}^{n_sn_t}} (\tilde{\bm p} + \tilde{\bm q}z + \tilde{\bm r}z^2)^\top {\bm t} \quad \text{s.t.} \quad H{\bm t} = h, \quad {\bm t} \geq 0.
\end{align*}

where
\[
    \tilde{\boldsymbol{p}} = ({\bm c'} + \Theta \boldsymbol{a}) \circ ({\bm c'} + \Theta \boldsymbol{a}), \quad 
    \tilde{\boldsymbol{q}} = (\Theta \boldsymbol{a}) \circ (\Theta \boldsymbol{b}) + (\Theta \boldsymbol{b}) \circ (\Theta \boldsymbol{a}), \quad \text{and} \quad \tilde{\boldsymbol{r}} = (\Theta \boldsymbol{b}) \circ (\Theta \boldsymbol{b}).
\]
By fixing $\mathcal{B}_u$ as the optimal basic index set of the linear program, the \textit{relative cost vector} w.r.t to the set of non-basis variables $\mathcal{B}_u^c$ is defined as
\[
    \boldsymbol{r}_{\mathcal{B}_u^c} = \boldsymbol{p} + \boldsymbol{q} z + \boldsymbol{r}z^2,
\]
where
\begin{equation}
    \label{define p q r}
    \boldsymbol{p} = (\tilde{\boldsymbol{p}}_{\mathcal{B}_u^c}^\top - \tilde{\boldsymbol{p}}_{\mathcal{B}_u}^\top H_{:,\mathcal{B}_u}^{-1} H_{:,\mathcal{B}_u^c})^\top, \quad
    \boldsymbol{q} = (\tilde{\boldsymbol{q}}_{\mathcal{B}_u^c}^\top - \tilde{\boldsymbol{q}}_{\mathcal{B}_u}^\top H_{:,\mathcal{B}_u}^{-1} H_{:,\mathcal{B}_u^c}) ^\top, \quad
    \boldsymbol{r} = (\tilde{\boldsymbol{r}}_{\mathcal{B}_u^c}^\top - \tilde{\boldsymbol{r}}_{\mathcal{B}_u}^\top H_{:,\mathcal{B}_u}^{-1} H_{:,\mathcal{B}_u^c})^\top,    
\end{equation}
$H_{:,\mathcal{B}_u}^{-1}$ is a sub-matrix of $H$ made up of all rows and columns in the set $\mathcal{B}_u$. The requirement for $\mathcal{B}_u$ to be the optimal basis index set is $\boldsymbol{r}_{\mathcal{B}_u^c} \geq \boldsymbol{0}$ (i.e., the cost in minimization problem will never decrease when the non-basic variables become positive and enter the basis). We note that the optimal basis index set $\mathcal{B}_u$ corresponds to the transportation $\mathcal{T}_u$. Therefore, the set $\mathcal{Z}_u$ is defined as 
\[
\begin{aligned}
    \mathcal{Z}_u &= \{ z \in \mathbb{R} \mid \mathcal{T}_{{\bm a} + {\bm b} z} = \mathcal{T}_u \}, \\
    &= \{ z \in \mathbb{R} \mid \mathcal{B}_{{\bm a} + {\bm b} z} = \mathcal{B}_u \}, \\
    &= \{ z \in \mathbb{R} \mid \boldsymbol{r}_{\mathcal{B}_u^c} = \boldsymbol{p} + \boldsymbol{q} z + \boldsymbol{r}z^2 \geq 0 \}.
\end{aligned}
\]
Thus, we obtain the result in Lemma \ref{lemma:cZ_u}.

\subsection{Proof of Lemma \ref{lemma:cZ_v}} \label{app:proof_cZ_v}
The identification of $\mathcal{Z}_v$ is constructed based on the results presented in \cite{lee2016exact}, in which the authors characterized conditioning event of Lasso by deriving from the KKT conditions. Let us define the KKT conditions of the Lasso after the $\mathcal{T}_u$ transportation as following:
\begin{equation}
\begin{aligned}
\label{eq: kkt conditions}
        \tilde{X}^\top_u \big(\tilde{X}_u \hat{\boldsymbol{\beta}}(z) &- \tilde{\bm Y}_u(z)\big) + \lambda \cS = 0, \\
        \cS_j &= \operatorname{sign}(\hat{\boldsymbol{\beta}}_j(z)), \quad \text{if } \hat{\boldsymbol{\beta}}_j(z) \neq 0, \\
        \cS_j &\in (-1, 1), \quad\quad\quad \text{if } \hat{\boldsymbol{\beta}}_j(z) = 0.
\end{aligned}
\end{equation}
The two first conditions of the set: 
\[\cZ_v = \left \{ 
		z \in \RR ~\Bigg |
		\begin{array}{l}
			\hat{{\bm \beta}}_j(z) \neq 0, ~\forall j \in \cM_v, \\
			\hat{{\bm \beta}}_j(z) = 0, ~\forall j \not \in \cM_v, \\
			{\rm sign}\Big (\hat{{\bm \beta}}_{\cM_v}(z) \Big ) = \cS_{\cM_v}
		\end{array}
	\right \}\] lead to the set $\cM_v$ being the result of the Lasso after DA. Then, by partitioning Eq. (\ref{eq: kkt conditions}) according to the active set $\cM_v$, adopting the convention that $\cM_{v_c}$ means "$\cM_v$'s complement", the KKT conditions in (\ref{eq: kkt conditions}) can be rewritten as following:
\begin{equation}
\label{eq: partition KKT conditions}
    \begin{aligned}
        \tilde{X}_{u_{\mathcal{M}_v}}^\top \left(\tilde{X}_{u_{\mathcal{M}_v}} \hat{\boldsymbol{\beta}}_{\mathcal{M}_v}(z) - \tilde{\bm Y}_u(z)\right) + \lambda \mathcal{S}_{\mathcal{M}_v} &= 0, \\
        \tilde{X}_{u_{\mathcal{M}_{v_c}}}^\top \left(\tilde{X}_{u_{\mathcal{M}_v}} \hat{\boldsymbol{\beta}}_{\mathcal{M}_v}(z) - \tilde{\bm Y}_u(z)\right) + \lambda \mathcal{S}_{\mathcal{M}_{v_c}} &= 0,\\
        \operatorname{sign}(\hat{\boldsymbol{\beta}}_{\mathcal{M}_v}(z)) &= \mathcal{S}_{\mathcal{M}_v}, \\
        \left\| \mathcal{S}_{\mathcal{M}_{v_c}} \right\|_\infty &< \mathbf{1}.
    \end{aligned}
\end{equation}
By solving the first two equations (\ref{eq: partition KKT conditions}) for $\hat{\boldsymbol{\beta}}_{\mathcal{M}_v}(z)$ and $\mathcal{S}_{\mathcal{M}_{v_c}}$, we obtain the equivalent set of conditions:
\begin{equation}
\begin{aligned}
    \hat{\bm \beta}_{\cM_v}(z) &= (\tilde{X}_{u_{\cM_v}}^\top \tilde{X}_{u_{\cM_v}})^{-1} (\tilde{X}_{u_{\cM_v}}^\top \tilde{\bm Y}_u(z) - \lambda\mathcal{S}_{\cM_v}), \\
    \cS_{\cM_{v_c}} &= \tilde{X}_{u_{\cM_{v_c}}}^\top (\tilde{X}_{u_{\cM_v}}^\top)^+ \mathcal{S}_{\cM_{v}} + \frac{1}{\lambda} \tilde{X}_{u_{\cM_{v_c}}}^\top (I_{n_s+n_t} - \tilde{X}_{u_{\cM_v}}(\tilde{X}_{u_{\cM_v}})^+)\tilde{\bm Y}_u(z), \\
    \operatorname{sign}(\hat{\bm \beta}_{\cM_v}(z)) &= \mathcal{S}_{\cM_v}, \\
    \left\|\cS_{\cM_{v_c}}\right\|_\infty &< \mathbf{1},
\end{aligned}
\nonumber
\end{equation}
where $\left(X\right)^+ = (X^\top X)^{-1}X^\top$, $\left(X^\top\right)^+ = X(X^\top X)^{-1}$. 
Then, the set $\cZ_v$ can be rewritten as:
\begin{equation}
\begin{aligned}
    \cZ_v &= \left \{ 
		z \in \RR \Bigg|
		\begin{array}{l}
        \begin{aligned}
			(\tilde{X}_{u_{\cM_v}}^\top \tilde{X}_{u_{\cM_v}})^{-1} (\tilde{X}_{u_{\cM_v}}^\top \tilde{\bm Y}_u(z) - \lambda\mathcal{S}_{\cM_v}) &= \hat{\bm \beta}_{\cM_v}(z), \\
    \tilde{X}_{u_{\cM_{v_c}}}^\top (\tilde{X}_{u_{\cM_v}}^\top)^+ \mathcal{S}_{\cM_{v}} + \frac{1}{\lambda} \tilde{X}_{u_{\cM_{v_c}}}^\top (I_{n_s+n_t} - \tilde{X}_{u_{\cM_v}}(\tilde{X}_{u_{\cM_v}})^+)\tilde{\bm Y}_u(z) &= \cS_{\cM_{v_c}},\\
    \operatorname{sign}(\hat{\bm \beta}_{\cM_v}(z)) &= \mathcal{S}_{\cM_v}, \\
    \left\|\cS_{\cM_{v_c}}\right\|_\infty &< \mathbf{1}.
	\end{aligned}
    \end{array}
    \right \}
\nonumber
\end{aligned}
\end{equation}
The two last conditions of $\cZ_v$ then can be rewritten as:
\begin{equation}
\label{sign condition}
\begin{aligned}
    &\quad\left\{\operatorname{sign}(\hat{\bm \beta}_{\cM_v}(z)) = \mathcal{S}_{\cM_v} \right\} \\ 
    &= \left\{\cS_{\cM_v} \circ \hat{\bm \beta}_{\cM_v}(z) > \mathbf{0}\right\}, \\
    &= \left\{ \cS_{\cM_v} \circ (\tilde{X}_{u_{\cM_v}}^\top \tilde{X}_{u_{\cM_v}})^{-1} (\tilde{X}_{u_{\cM_v}}^\top \tilde{\bm Y}_u(z) - \lambda\mathcal{S}_{\cM_v}) > \mathbf{0} \right\}, \\
    &= \left\{ \mathcal{S}_{\mathcal{M}_v} \circ \left(\tilde{X}_{u_{\cM_v}}\right)^+\tilde{\bm Y}_u(z) > \lambda \mathcal{S}_{\cM_v} \circ \left( \left(\tilde{X}_{u_{\cM_v}}^\top \tilde{X}_{u_{\cM_v}}\right)^{-1} \mathcal{S}_{\mathcal{M}_v}\right)  \right\}, \\
    &= \left\{ \mathcal{S}_{\mathcal{M}_v} \circ \left(\tilde{X}_{u_{\cM_v}}\right)^+\Omega_u {\bm Y}(z) > \lambda \mathcal{S}_{\cM_v} \circ \left( \left(\tilde{X}_{u_{\cM_v}}^\top \tilde{X}_{u_{\cM_v}}\right)^{-1} \mathcal{S}_{\mathcal{M}_v}\right)  \right\}, \\
    &= \left\{ \mathcal{S}_{\mathcal{M}_v} \circ \left(\tilde{X}_{u_{\cM_v}}\right)^+\Omega_u (\bm a + \bm b z) > \lambda \mathcal{S}_{\cM_v} \circ \left( \left(\tilde{X}_{u_{\cM_v}}^\top \tilde{X}_{u_{\cM_v}}\right)^{-1} \mathcal{S}_{\mathcal{M}_v}\right)  \right\}, \\
    &= \left\{ {\bm \psi}_0 z \leq {\bm \phi}_0\right\},
    \end{aligned}
    \nonumber
    \end{equation}
    \begin{equation}
    \begin{aligned}
    &\quad\left\{\left\|\cS_{\cM_{v_c}}\right\|_\infty < \mathbf{1}\right\} \\
    &= \left\{-\mathbf{1} < \cS_{\cM_{v_c}} < \mathbf{1}\right\}, \\
    &= \left\{ -\mathbf{1} < \tilde{X}_{u_{\cM_{v_c}}}^\top (\tilde{X}_{u_{\cM_v}}^\top)^+ \mathcal{S}_{\cM_{v}} + \frac{1}{\lambda} \tilde{X}_{u_{\cM_{v_c}}}^\top (I_{n_s+n_t} - \tilde{X}_{u_{\cM_v}}(\tilde{X}_{u_{\cM_v}})^+)\tilde{\bm Y}_u(z) < \mathbf{1} \right\}, \\
    &= \left\{ \begin{aligned}
        \frac{1}{\lambda} \tilde{X}_{u_{\cM_{v_c}}}^\top (I_{n_s+n_t} - \tilde{X}_{u_{\cM_v}}(\tilde{X}_{u_{\cM_v}})^+)\Omega_u\bm Y(z) &< \mathbf{1} - \tilde{X}_{u_{\cM_{v_c}}}^\top (\tilde{X}_{u_{\cM_v}}^\top)^+ \mathcal{S}_{\cM_v} \\
        - \frac{1}{\lambda} \tilde{X}_{u_{\cM_{v_c}}}^\top (I_{n_s+n_t} - \tilde{X}_{u_{\cM_v}}(\tilde{X}_{u_{\cM_v}})^+)\Omega_u\bm Y(z) &< \mathbf{1} + \tilde{X}_{u_{\cM_{v_c}}}^\top (\tilde{X}_{u_{\cM_v}}^\top)^+ \mathcal{S}_{\cM_v}
    \end{aligned}
    \right\}, \\
    &= \left\{ \begin{aligned}
        \frac{1}{\lambda} \tilde{X}_{u_{\cM_{v_c}}}^\top (I_{n_s+n_t} - \tilde{X}_{u_{\cM_v}}(\tilde{X}_{u_{\cM_v}})^+)\Omega_u\bm (\bm a + \bm b z) &< \mathbf{1} - \tilde{X}_{u_{\cM_{v_c}}}^\top (\tilde{X}_{u_{\cM_v}}^\top)^+ \mathcal{S}_{\cM_v} \\
        - \frac{1}{\lambda} \tilde{X}_{u_{\cM_{v_c}}}^\top (I_{n_s+n_t} - \tilde{X}_{u_{\cM_v}}(\tilde{X}_{u_{\cM_v}})^+)\Omega_u\bm (\bm a + \bm b z) &< \mathbf{1} + \tilde{X}_{u_{\cM_{v_c}}}^\top (\tilde{X}_{u_{\cM_v}}^\top)^+ \mathcal{S}_{\cM_v}
    \end{aligned}
    \right\}, \\
    &= \left\{ 
    \begin{pmatrix}
        {\bm \psi}_{10} \\
        {\bm \psi}_{11}
    \end{pmatrix} z \leq 
    \begin{pmatrix}
        {\bm \phi}_{10} \\
        {\bm \phi}_{11}
    \end{pmatrix}\right\}, \\
    &= \left\{ {\bm \psi}_1 z \leq {\bm \phi}_1 \right\}.
\end{aligned}
\nonumber
\end{equation}
Finally, the set $\mathcal{Z}_v$ can be defined as:
\begin{equation}
\begin{aligned}
    \mathcal{Z}_v &= \left\{z \in \mathbb{R} \mid {\bm \psi} z \leq {\bm \phi} \right\},
\end{aligned}
\nonumber
\end{equation}
where ${\bm \psi} = \left({\bm \psi}_0 \quad {\bm \psi}_1\right)^\top$, ${\bm \phi} = \left({\bm \phi}_0 \quad {\bm \phi}_1\right)^\top$. \newline
Thus, the set $\cZ_v$ can be identified by solving a set of linear inequalities w.r.t $z$.
\subsection{Proof of Lemma \ref{lemma:cZ_v_elastic_net}} \label{app:proof_cZ_v_elastic_net}
The identification of $\cZ_v^{\rm enet}$ is constructed similar to that in Lemma \ref{lemma:cZ_v}. Let us define the KKT conditions of the elastic net after the $\mathcal{T}_u$ transportation as following:
\begin{equation}
\begin{aligned}
\label{eq: kkt conditions elastic net}
    (\tilde{X}_u^\top \tilde{X}_u + \gamma I_p) \hat{\boldsymbol{\beta}}^{\rm enet}(z) &- \tilde{X}_u^\top \tilde{\bm Y}_u(z) + \lambda \cS = 0, \\
    \cS_j &= \operatorname{sign}(\hat{\boldsymbol{\beta}}^{\rm enet}_j(z)), \quad \text{if } \hat{\boldsymbol{\beta}}^{\rm enet}_j(z) \neq 0, \\
    \cS_j &\in (-1, 1), \quad\quad\quad \text{if } \hat{\boldsymbol{\beta}}^{\rm enet}_j(z) = 0.
\end{aligned}
\end{equation}
By following the same approach as in Lemma \ref{lemma:cZ_v}, the KKT conditions of elastic net in (\ref{eq: kkt conditions elastic net}) can be partitioned according to $\cM_v^{\rm enet}$ as below:
\begin{equation}
\label{eq: partition KKT conditions elastic net}
    \begin{aligned}        (\tilde{X}_{u_{\cM_v^{\rm enet}}}^\top \tilde{X}_{u_{\cM_v^{\rm enet}}} + \gamma I) \hat{\boldsymbol{\beta}}^{\rm enet}(z) - \tilde{X}_{u_{\cM_v^{\rm enet}}}^\top \tilde{\bm Y}_u(z) + \lambda \cS &= 0, \\
        \tilde{X}_{u_{\mathcal{M}_{v_c}^{\rm enet}}}^\top \left(\tilde{X}_{u_{\cM_v^{\rm enet}}} \hat{\boldsymbol{\beta}}_{\cM_v^{\rm enet}}(z) - \tilde{\bm Y}_u(z)\right) + \lambda \mathcal{S}_{\cM_{v_c}^{\rm enet}} &= 0,\\
        \operatorname{sign}(\hat{\boldsymbol{\beta}}_{\mathcal{M}_v}(z)) &= \mathcal{S}_{\mathcal{M}_v}, \\
        \left\| \mathcal{S}_{\mathcal{M}_{v_c}} \right\|_\infty &< \mathbf{1}.
    \end{aligned}
\end{equation}
By solving the first two equations (\ref{eq: partition KKT conditions elastic net}) for $\hat{\boldsymbol{\beta}}_{\mathcal{M}_v^{\rm enet}}(z)$ and $\mathcal{S}_{\mathcal{M}_{v_c}^{\rm enet}}$, we obtain the equivalent set of conditions:
\begin{equation}
\label{rewrite KKT conditions elastic net}
\begin{aligned}
    \hat{\boldsymbol{\beta}}_{\mathcal{M}_v^{\rm enet}}(z) &= (\tilde{X}_{u_{\cM_v^{\rm enet}}}^\top \tilde{X}_{u_{\cM_v^{\rm enet}}} + \gamma I)^{-1} (\tilde{X}_{u_{\cM_v^{\rm enet}}}^\top \tilde{\bm Y}_u(z) - \lambda\mathcal{S}_{\mathcal{M}^{\rm enet}_v}), \\
    \mathcal{S}_{\mathcal{M}_{v_c}^{\rm enet}} &= \tilde{X}_{u_{\cM_{v_c}^{\rm enet}}}^\top (\tilde{X}_{u_{\cM_v^{\rm enet}}})^{**} \mathcal{S}_{\mathcal{M}^{\rm enet}_v} + \frac{1}{\lambda} \tilde{X}_{u_{\cM_{v_c}^{\rm enet}}}^\top (I_{n_s+n_t} - \tilde{X}_{u_{\cM_v^{\rm enet}}}(\tilde{X}_{u_{\cM_v^{\rm enet}}})^*)\tilde{\bm Y}_u(z), \\
    \operatorname{sign}(\hat{\boldsymbol{\beta}}_{\mathcal{M}_v^{\rm enet}}(z)) &= \mathcal{S}_{\mathcal{M}^{\rm enet}_v}, \\
    \left\|\mathcal{S}_{\mathcal{M}_{v_c}^{\rm enet}}\right\|_\infty &< \mathbf{1},
\end{aligned}
\nonumber
\end{equation}
where $(X)^* = (X^\top X + \gamma I)^{-1}X^\top$, $(X)^{**} = X(X^\top X + \gamma I)^{-1}$. The set $\cZ_v^{\rm enet}$ can be rewritten as:
\begin{equation}
\begin{aligned}
    \cZ_v^{\rm enet} &= \left \{ 
		z \in \RR \Bigg|
		\begin{array}{l}
        \begin{aligned}
    (\tilde{X}_{u_{\cM_v^{\rm enet}}}^\top \tilde{X}_{u_{\cM_v^{\rm enet}}} + \gamma I)^{-1} (\tilde{X}_{u_{\cM_v^{\rm enet}}}^\top \tilde{\bm Y}_u(z) - \lambda\mathcal{S}_{\mathcal{M}^{\rm enet}_v}) &= \hat{\boldsymbol{\beta}}_{\mathcal{M}_v^{\rm enet}}(z), \\
    \tilde{X}_{u_{\cM_{v_c}^{\rm enet}}}^\top (\tilde{X}_{u_{\cM_v^{\rm enet}}})^{**} \mathcal{S}_{\mathcal{M}^{\rm enet}_v} + \frac{1}{\lambda} \tilde{X}_{u_{\cM_{v_c}^{\rm enet}}}^\top (I_{n_s+n_t} - \tilde{X}_{u_{\cM_v^{\rm enet}}}(\tilde{X}_{u_{\cM_v^{\rm enet}}})^*)\tilde{\bm Y}_u(z) &= \mathcal{S}_{\mathcal{M}_{v_c}^{\rm enet}}, \\
    \operatorname{sign}(\hat{\boldsymbol{\beta}}_{\mathcal{M}_v^{\rm enet}}(z)) &= \mathcal{S}_{\mathcal{M}^{\rm enet}_v}, \\
    \left\|\mathcal{S}_{\mathcal{M}_{v_c}^{\rm enet}}\right\|_\infty &< \mathbf{1}.
	\end{aligned}
    \end{array}
    \right \}
\nonumber
\end{aligned}
\end{equation}
The last two conditions of $\cZ_v^{\rm enet}$ can be characterized by a set of inequalities as in Lemma \ref{lemma:cZ_v} with similar approach:
\begin{equation}
    \begin{aligned}
        &\quad\left\{\operatorname{sign}(\hat{\boldsymbol{\beta}}_{\mathcal{M}_v^{\rm enet}}(z)) = \mathcal{S}_{\mathcal{M}^{\rm enet}_v}\right\} \\ &= \left\{\mathcal{S}_{\mathcal{M}^{\rm enet}_v} \circ \hat{\boldsymbol{\beta}}_{\mathcal{M}_v^{\rm enet}}(z) > \mathbf{0}\right\}, \\
        &= \left\{\mathcal{S}_{\mathcal{M}^{\rm enet}_v} \circ (\tilde{X}_{u_{\cM_v^{\rm enet}}}^\top \tilde{X}_{u_{\cM_v^{\rm enet}}} + \gamma I)^{-1} (\tilde{X}_{u_{\cM_v^{\rm enet}}}^\top \tilde{\bm Y}_u(z) - \lambda\mathcal{S}_{\mathcal{M}^{\rm enet}_v}) > \mathbf{0}\right\}, \\
        &= \left\{\mathcal{S}_{\mathcal{M}^{\rm enet}_v} \circ \left(\tilde{X}_{u_{\cM_v^{\rm enet}}}\right)^* \tilde{\bm Y}_u(z) > \lambda \mathcal{S}_{\mathcal{M}^{\rm enet}_v} \circ \left(\left(\tilde{X}_{u_{\cM_v^{\rm enet}}}^\top \tilde{X}_{u_{\cM_v^{\rm enet}}} + \gamma I\right)^{-1} \mathcal{S}_{\cM_v^{\rm enet}}\right)\right\}, \\
        &= \left\{\mathcal{S}_{\mathcal{M}^{\rm enet}_v} \circ \left(\tilde{X}_{u_{\cM_v^{\rm enet}}}\right)^* \Omega_u {\bm Y}(z) > \lambda \mathcal{S}_{\mathcal{M}^{\rm enet}_v} \circ \left(\left(\tilde{X}_{u_{\cM_v^{\rm enet}}}^\top \tilde{X}_{u_{\cM_v^{\rm enet}}} + \gamma I\right)^{-1} \mathcal{S}_{\cM_v^{\rm enet}}\right)\right\}, \\
        &= \left\{\mathcal{S}_{\mathcal{M}^{\rm enet}_v} \circ \left(\tilde{X}_{u_{\cM_v^{\rm enet}}}\right)^* \Omega_u (\bm a + \bm b z) > \lambda \mathcal{S}_{\mathcal{M}^{\rm enet}_v} \circ \left(\left(\tilde{X}_{u_{\cM_v^{\rm enet}}}^\top \tilde{X}_{u_{\cM_v^{\rm enet}}} + \gamma I\right)^{-1} \mathcal{S}_{\cM_v^{\rm enet}}\right)\right\}, \\
        &= \left\{{\bm \psi}^{\rm enet}_0 z \leq {\bm \phi}^{\rm enet}_0\right\},\\
    \end{aligned}
    \nonumber
    \end{equation}
\begin{equation}
\begin{aligned}
    &\quad\left\{\left\|\mathcal{S}_{\mathcal{M}_{v_c}^{\rm enet}}\right\|_\infty < \mathbf{1}\right\} \\ &= \left\{-\mathbf{1} < \cS_{\cM_{v_c}}^{\rm enet} < \mathbf{1}\right\}, \\
    \qquad\qquad&= \left\{-\mathbf{1} < \tilde{X}_{u_{\cM_{v_c}^{\rm enet}}}^\top (\tilde{X}_{u_{\cM_v^{\rm enet}}})^{**} \mathcal{S}_{\mathcal{M}^{\rm enet}_v} + \frac{1}{\lambda} \tilde{X}_{u_{\cM_{v_c}^{\rm enet}}}^\top (I_{n_s+n_t} - \tilde{X}_{u_{\cM_v^{\rm enet}}}(\tilde{X}_{u_{\cM_v^{\rm enet}}})^*)\tilde{\bm Y}_u(z) < \mathbf{1}\right\}, \\
    &= \left\{\begin{aligned}
        \frac{1}{\lambda} \tilde{X}_{u_{\cM_{v_c}^{\rm enet}}}^\top \left(I_{n_s+n_t} - \tilde{X}_{u_{\cM_v^{\rm enet}}}\left(\tilde{X}_{u_{\cM_v^{\rm enet}}}\right)^*\right)\tilde{\bm Y}_u(z) &< \mathbf{1} - \tilde{X}_{u_{\cM_{v_c}^{\rm enet}}}^\top \left(\tilde{X}_{u_{\cM_v^{\rm enet}}}\right)^{**} \mathcal{S}_{\cM_v^{\rm enet}} \\
        - \frac{1}{\lambda} \tilde{X}_{u_{\cM_{v_c}^{\rm enet}}}^\top \left(I_{n_s+n_t} - \tilde{X}_{u_{\cM_v^{\rm enet}}}\left(\tilde{X}_{u_{\cM_v^{\rm enet}}}\right)^*\right)\tilde{\bm Y}_u(z) &< \mathbf{1} + \tilde{X}_{u_{\cM_{v_c}^{\rm enet}}}^\top \left(\tilde{X}_{u_{\cM_v^{\rm enet}}}\right)^{**} \mathcal{S}_{\cM_v^{\rm enet}}
    \end{aligned}\right\}, \\
    &= \left\{\begin{aligned}
        \frac{1}{\lambda} \tilde{X}_{u_{\cM_{v_c}^{\rm enet}}}^\top \left(I_{n_s+n_t} - \tilde{X}_{u_{\cM_v^{\rm enet}}}\left(\tilde{X}_{u_{\cM_v^{\rm enet}}}\right)^*\right) \Omega_u {\bm Y}(z) &< \mathbf{1} - \tilde{X}_{u_{\cM_{v_c}^{\rm enet}}}^\top \left(\tilde{X}_{u_{\cM_v^{\rm enet}}}\right)^{**} \mathcal{S}_{\cM_v^{\rm enet}} \\
        - \frac{1}{\lambda} \tilde{X}_{u_{\cM_{v_c}^{\rm enet}}}^\top \left(I_{n_s+n_t} - \tilde{X}_{u_{\cM_v^{\rm enet}}}\left(\tilde{X}_{u_{\cM_v^{\rm enet}}}\right)^*\right)\Omega_u {\bm Y}(z) &< \mathbf{1} + \tilde{X}_{u_{\cM_{v_c}^{\rm enet}}}^\top \left(\tilde{X}_{u_{\cM_v^{\rm enet}}}\right)^{**} \mathcal{S}_{\cM_v^{\rm enet}}
    \end{aligned}\right\}, \\
    &= \left\{\begin{aligned}
        \frac{1}{\lambda} \tilde{X}_{u_{\cM_{v_c}^{\rm enet}}}^\top \left(I_{n_s+n_t} - \tilde{X}_{u_{\cM_v^{\rm enet}}}\left(\tilde{X}_{u_{\cM_v^{\rm enet}}}\right)^*\right) \Omega_u (\bm a + \bm b z) &< \mathbf{1} - \tilde{X}_{u_{\cM_{v_c}^{\rm enet}}}^\top \left(\tilde{X}_{u_{\cM_v^{\rm enet}}}\right)^{**} \mathcal{S}_{\cM_v^{\rm enet}} \\
        - \frac{1}{\lambda} \tilde{X}_{u_{\cM_{v_c}^{\rm enet}}}^\top \left(I_{n_s+n_t} - \tilde{X}_{u_{\cM_v^{\rm enet}}}\left(\tilde{X}_{u_{\cM_v^{\rm enet}}}\right)^*\right)\Omega_u (\bm a + \bm b z) &< \mathbf{1} + \tilde{X}_{u_{\cM_{v_c}^{\rm enet}}}^\top \left(\tilde{X}_{u_{\cM_v^{\rm enet}}}\right)^{**} \mathcal{S}_{\cM_v^{\rm enet}}
    \end{aligned}\right\}, \\
    &= \left\{ 
    \begin{pmatrix}
        {\bm \psi}_{10}^{\rm enet} \\
        {\bm \psi}_{11}^{\rm enet}
    \end{pmatrix} z \leq 
    \begin{pmatrix}
        {\bm \phi}_{10}^{\rm enet} \\
        {\bm \phi}_{11}^{\rm enet}
    \end{pmatrix}\right\}, \\
    &= \left\{ {\bm \psi}_1^{\rm enet} z \leq {\bm \phi}_1^{\rm enet} \right\}.
    \end{aligned}
\nonumber
\end{equation}
Finally, the set $\mathcal{Z}^{\rm enet}_v$ can also be identified by solving a set of linear inequalities w.r.t $z$:
\begin{equation}
\begin{aligned}
    \mathcal{Z}^{\rm enet}_v &= \left\{z \in \mathbb{R} \mid {\bm \psi}^{\rm enet} z \leq {\bm \phi}^{\rm enet} \right\},
\end{aligned}
\nonumber
\end{equation}
where ${\bm \psi}^{\rm enet} = \left({\bm \psi}_0^{\rm enet} \quad {\bm \psi}_1^{\rm enet}\right)^\top$, ${\bm \phi}^{\rm enet} = \left({\bm \phi}_0^{\rm enet} \quad {\bm \phi}_1^{\rm enet}\right)^\top$.

\subsection{Additional Experiments} \label{app:additional_experiment}

\textbf{Real-World Datasets.}
As noted in \S\ref{sec:experment}, we also we conducted a comparison of different methods on two datasets: the CO2 Emissions Canada dataset and the Walmart dataset, both available on Kaggle. We also present the percentage of \textit{p}-value less than or equal $\alpha$ on each dataset, which can be used for providing insights into the statistical power of each method.
\begin{figure}[!t]
    \centering
    \begin{subfigure}[b]{0.85\linewidth}  
        \includegraphics[width=\linewidth]{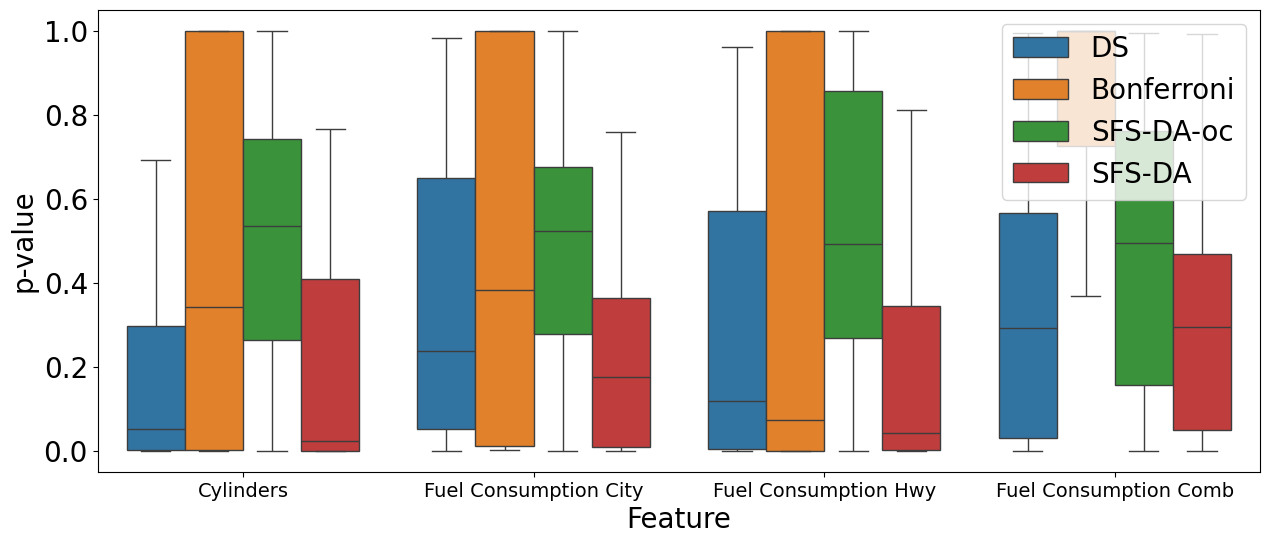}  
        \caption{CO2 Emissions Canada dataset. The source domain comprises ``vehicles using gasoline fuel'', while the target domain includes ``vehicles that use other types of fuel''.}
    \end{subfigure}

    \vspace{20pt}
    
    \begin{subfigure}[b]{0.85\linewidth}
        \includegraphics[width=\linewidth]{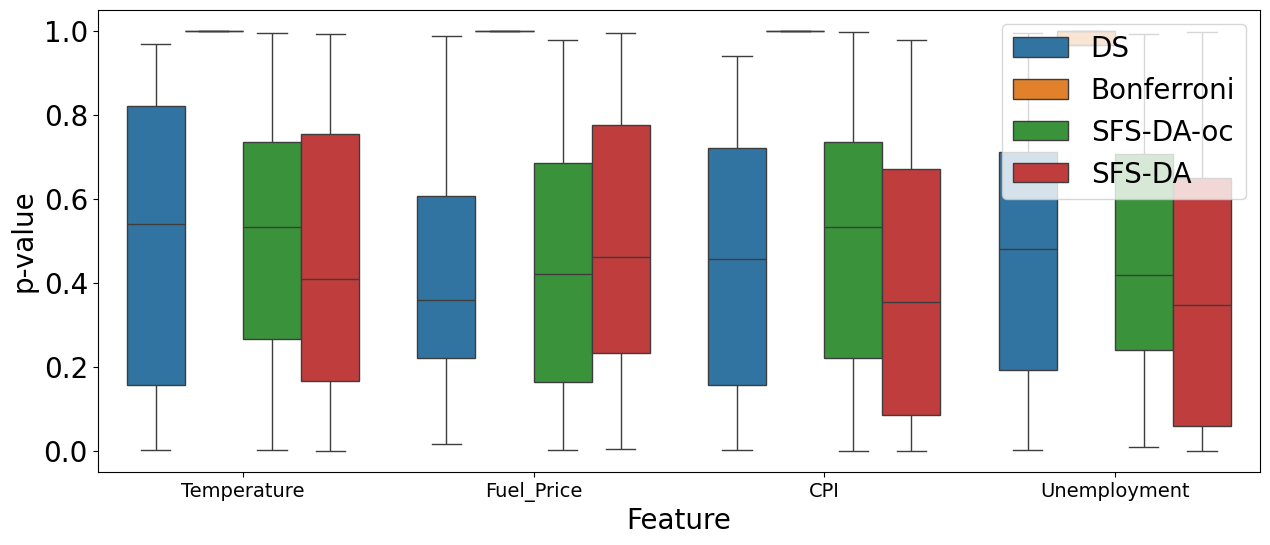}
        
        \caption{Walmart dataset. The source domain is ``people who go shopping at Walmart on regular days'', while the target domain is ``people who go shopping at Walmart on holidays''.}
    \end{subfigure}
    
    \caption{Distributions of \textit{p}-values for each feature in the CO2 Emissions Canada and Walmart datasets. While the median \textit{p}-value for {\tt DS} is smaller than that of the proposed {\tt SFS-DA} in one case (Fuel Price in the Walmart dataset), in all other cases, the median \textit{p}-value for {\tt SFS-DA} is smaller than those of the other methods, indicating that {\tt SFS-DA} demonstrates the superior statistical power.}
    \label{fig:boxplot co2 and walmart}
\end{figure}

\begin{figure}[!t]
    \centering
    \begin{subfigure}[b]{0.48\linewidth}  
    \centering
        \includegraphics[width=.9\linewidth]{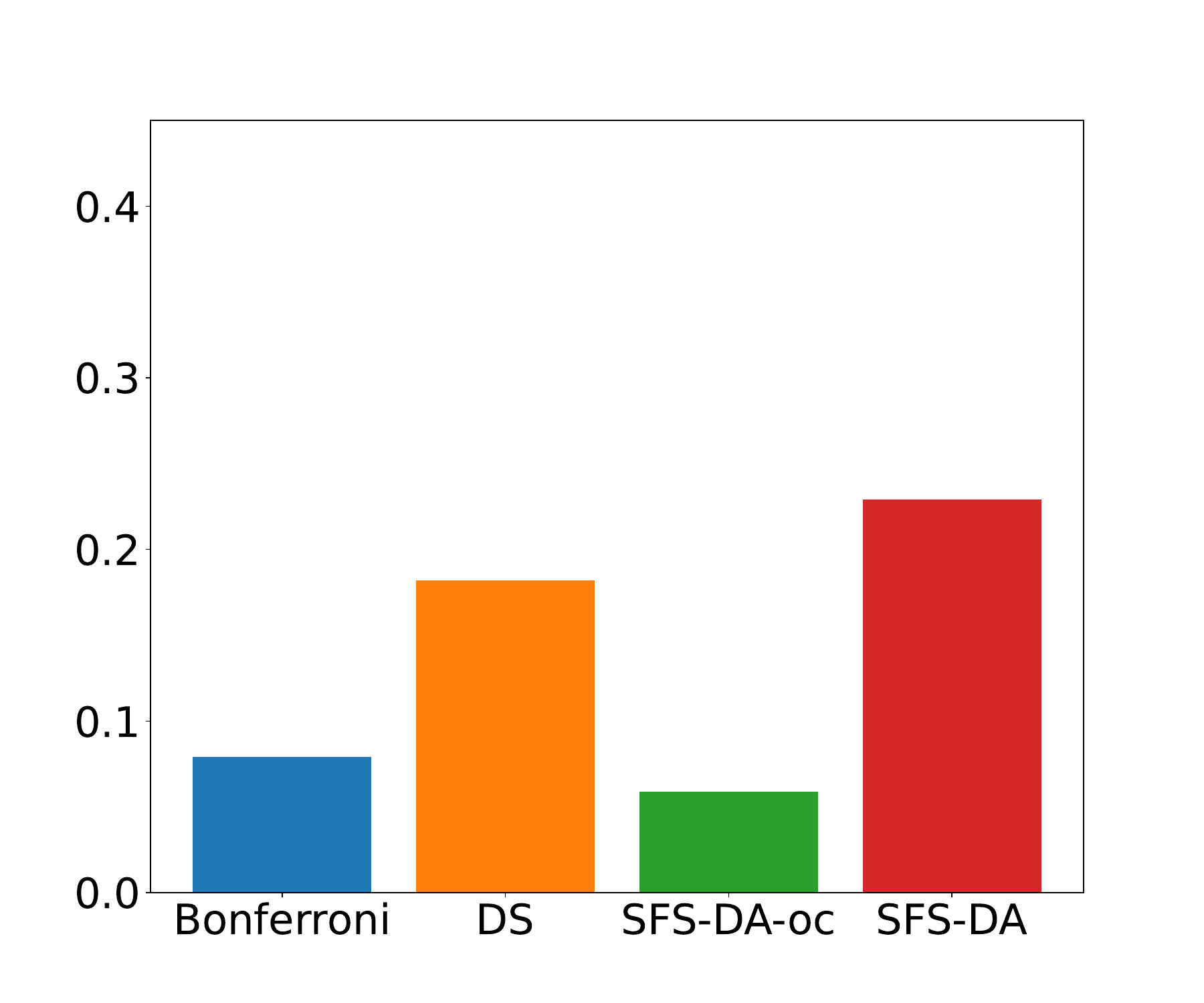}  
        \caption{Diabetes}
        \label{fig:percentage diabetes}
    \end{subfigure}
    \hspace{0.02\linewidth}  
    \begin{subfigure}[b]{0.48\linewidth}
    \centering
        \includegraphics[width=.9\linewidth]{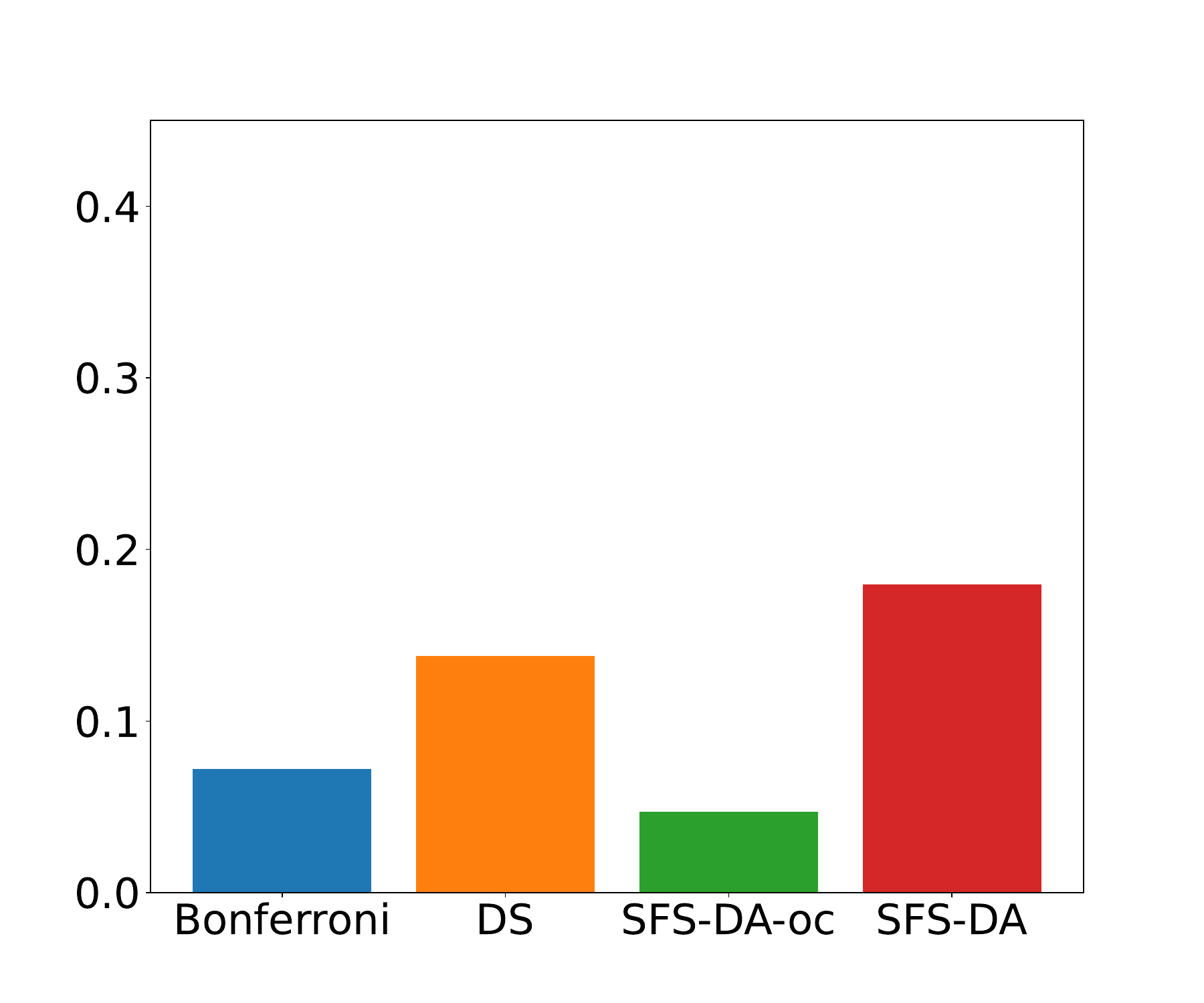}
        \caption{Heart Failure}
        \label{fig:percentage heart failure}
    \end{subfigure}

\begin{subfigure}[b]{0.48\linewidth}  
\centering
        \includegraphics[width=.9\linewidth]{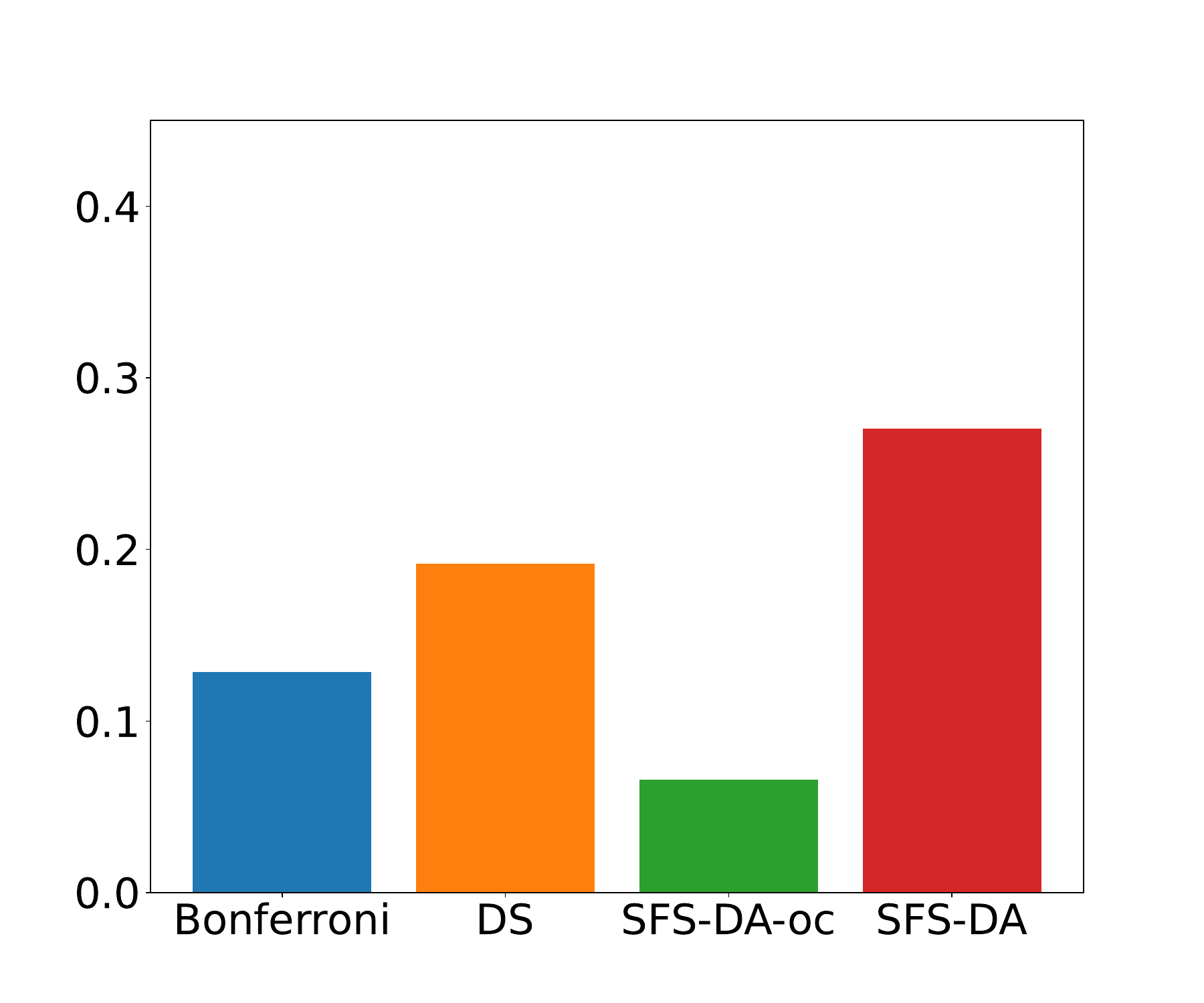}  
        \caption{Seoul Bike}
        \label{fig:percentage seoul bike}
    \end{subfigure}
    \hspace{0.02\linewidth}  
    \begin{subfigure}[b]{0.48\linewidth}
    \centering
        \includegraphics[width=.9\linewidth]{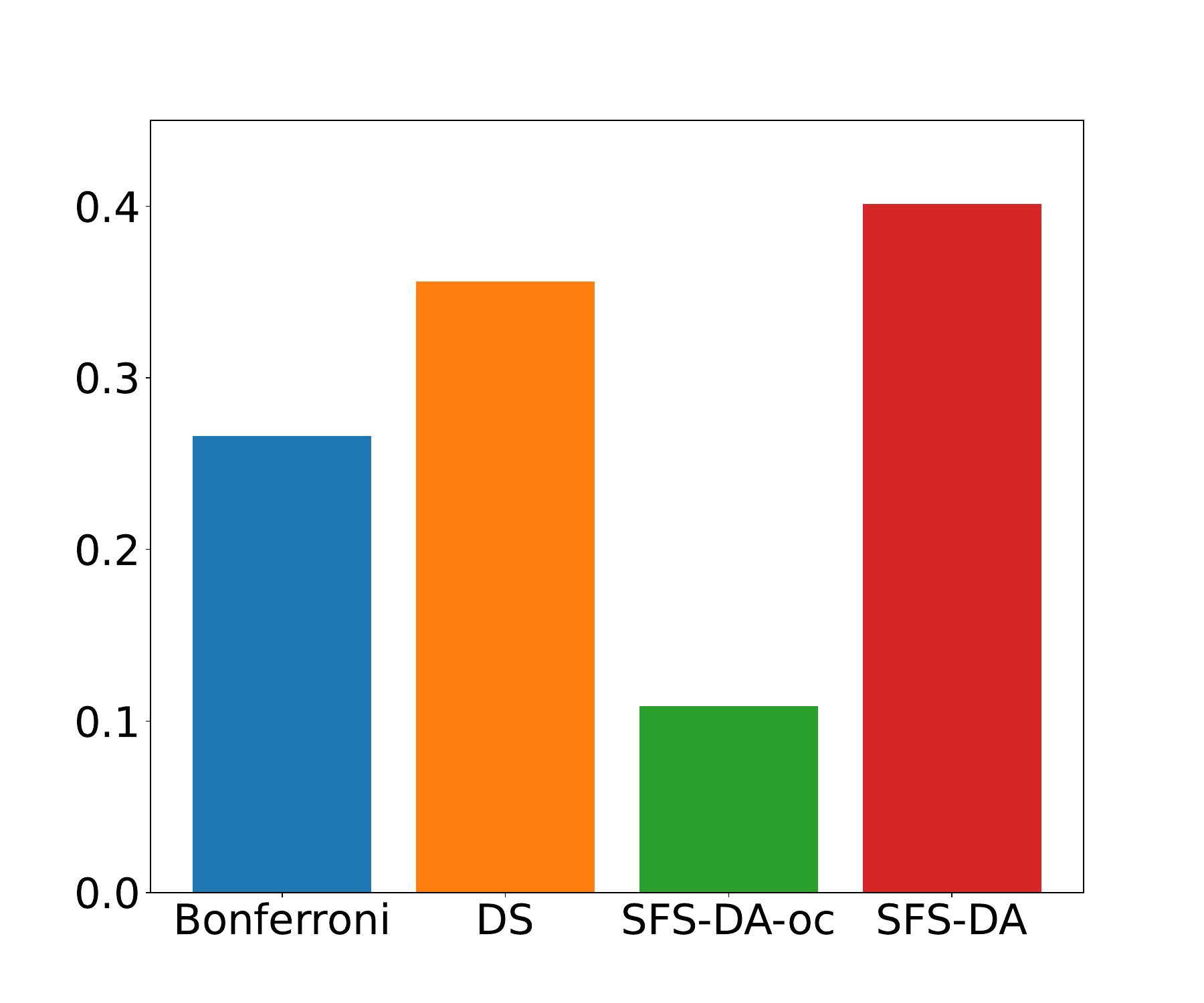}
        \caption{CO2 Emissions Canada}
        \label{fig:percentage co2}
    \end{subfigure}

    \begin{subfigure}[b]{0.47\linewidth}
        \includegraphics[width=.9\linewidth]{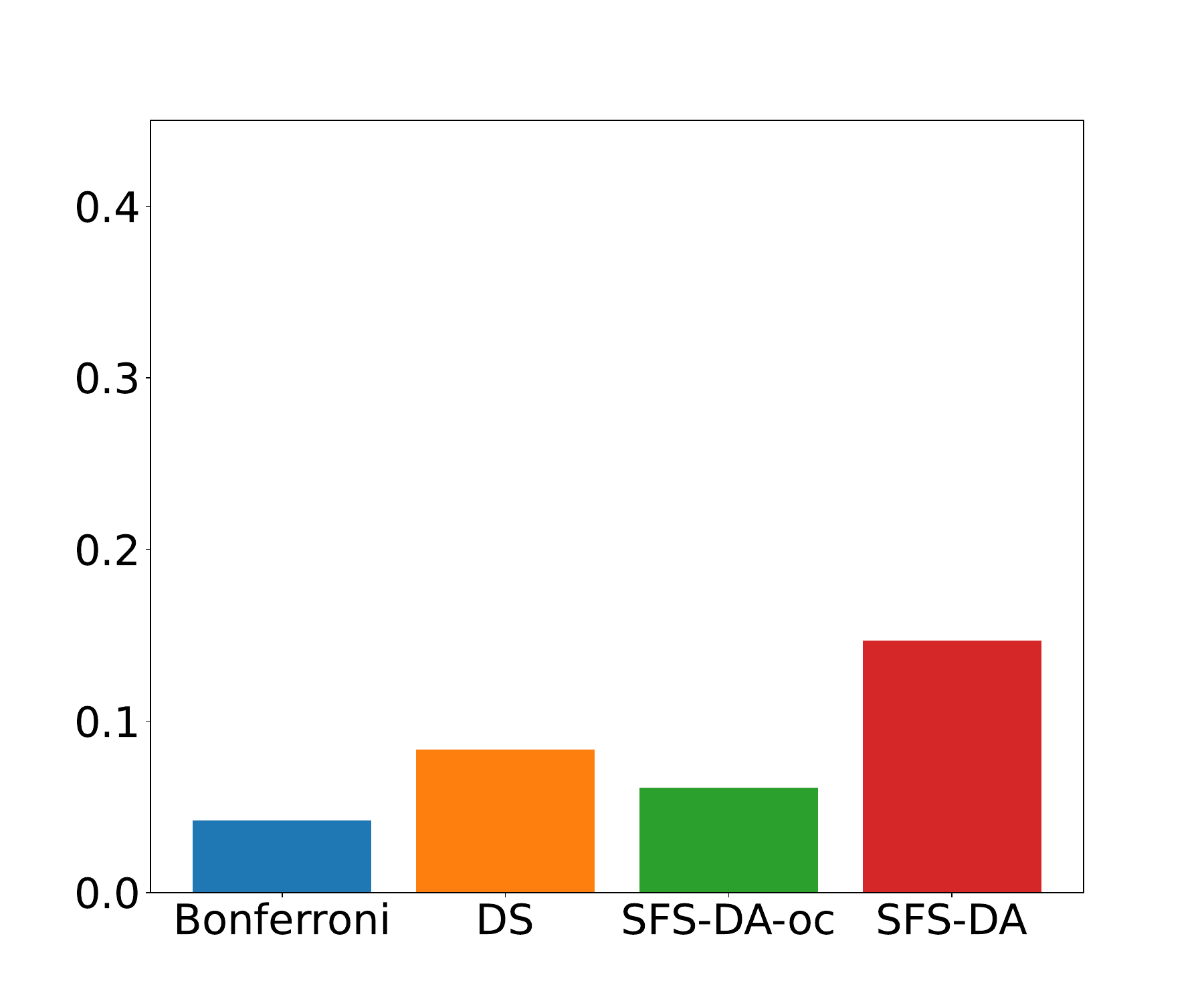}
        \caption{Walmart}
        \label{fig:percentage walmart}
    \end{subfigure}
    
    \caption{$\mathbb{P}(p\text{-value}\leq\alpha)$ on each dataset, where $\alpha = 0.05$. In this setting, we randomly select one feature from the set of selected features to conduct inference. In all cases, the percentage of significant \textit{p}-values from the proposed method is higher than that of the competing methods. This suggests that our proposed {\tt SFS-DA} method exhibits higher statistical power compared to the alternatives.}
    \label{fig:percentage}
\end{figure}

\end{document}